\newtheorem{theo}{Theorem}
\newtheorem{lem}{Lemma}
\newtheorem{df}{Definition}
\newtheorem{cor}{Corollary}
\def\R{{\mathbb{R}}}
\def\N{{\mathbb{N}}}
\def\a{{\bm a}}
\def\d{{\bm d}}
\def\e{{\bm e}}
\def\x{{\bm x}}
\def\y{{\bm y}}
\def\u{{\bm u}}
\def\0{{\bm 0}}
\def\1{{\bm 1}}
\def\Rbb{\mathbb{R}}
\def\Ebb{\mathbb{E}}
\title{Parallel Distributed Block Coordinate Descent Methods based on Pairwise Comparison Oracle}
\author[1]{Kota Matsui}
\author[2]{Wataru Kumagai}
\author[3]{Takafumi Kanamori}
\affil[1]{Nagoya Institute of Technology}
\affil[2]{Kanagawa University}
\affil[3]{Nagoya University}
\date{}
\begin{document}
\maketitle

\begin{abstract}
This paper provides a block coordinate descent algorithm to solve unconstrained optimization problems. In our algorithm, computation of function values or gradients is not required. Instead, pairwise comparison of function values is used. Our algorithm consists of two steps; one is the direction estimate step and the other is the search step. Both steps require only pairwise comparison of function values, which tells us only the order of function values over two points. In the direction estimate step, a Newton type search direction is estimated. A computation method like block coordinate descent methods is used with the pairwise comparison. In the search step, a numerical solution is updated along the estimated direction. The computation in the direction estimate step can be easily parallelized, and thus, the algorithm works efficiently to find the minimizer of the objective function. Also, we show an upper bound of the convergence rate. In numerical experiments, we show that our method efficiently finds the optimal solution compared to some existing methods based on the pairwise comparison. 
\end{abstract}

\section{Introduction}
\label{int}
Recently, demand for large-scale complex optimization is increasing in computational science, engineering
and many of other fields. In that kind of problems, there are many difficulties caused by  
noise in function evaluation, many tuning parameters and  high computation cost.  
In such cases, derivatives of the objective function 
are unavailable or computationally infeasible. These problems can be treated by the derivative-free optimization (DFO) 
methods.  

DFO is the tool for optimization without derivative information of the objective function
and constraints, and it has been widely studied for decades~\cite{conn2009introduction,rios2013derivative}. 
DFO algorithms include gradient descent methods with a finite difference gradient
estimation~\cite{Fu06:sto_grad_est,Flaxman:2005:OCO:1070432.1070486},  
some direct search methods using only function
values~\cite{audet2002analysis,nelder65:_simpl_method_funct_minim},  
and trust-region methods~\cite{conn2000trust}. 

There is, however, a more restricted setting in which not only derivatives
but also values of the objective function are unavailable or computationally infeasible.
In such a situation, the so-called pairwise comparison oracle, that tells us an order of
function values on two evaluation points, is used instead of derivatives and function
evaluation~\cite{nelder65:_simpl_method_funct_minim,jamieson12:_query_compl_deriv_free_optim}. 
For example, the pairwise comparison is used in learning to rank to collect training samples to estimate 
the preference function of the ranking problems~\cite{mohri2012foundations}. In decision
making, 
finding the most preferred feasible solution from among the set of many alternatives is an 
important application of ranking methods using the pairwise comparison. 
Also, other type of information such as stochastic gradient-sign oracle has been
studied~\cite{ramdas2013algorithmic}. 

Now, let us introduce two DFO methods, i.e., the Nelder-Mead
method~\cite{nelder65:_simpl_method_funct_minim} and stochastic coordinate descent
algorithm~\cite{jamieson12:_query_compl_deriv_free_optim}. They are closely related to our
work. In both methods, the pairwise comparison of function values is used as a building
block in optimization algorithms. 

Nelder and Mead's downhill simplex method~\cite{nelder65:_simpl_method_funct_minim} was
proposed in early study of algorithms based on the pairwise comparison of function
values. 
In each iteration 
of the algorithm, a simplex that approximates the objective function is constructed
 according to ranking of function values on sampled points. 
Then, the simplex receives four operations, namely,  reflection, expansion, contraction and reduction  
in order to get close to the optimal solution. 
Unfortunately, the convergence of the Nelder-Mead algorithm is theoretically guaranteed
only in low-dimension problems~\cite{Lagarias98convergenceproperties}. 
In high dimensional problems, the Nelder-Mead algorithm works poorly as shown in
\cite{gao12:_implem_nelder_mead_simpl_algor_adapt_param}. 


The stochastic coordinate descent algorithm using only the noisy pairwise comparison was
proposed in \cite{jamieson12:_query_compl_deriv_free_optim}. 
Lower and upper bounds of the convergence rate were also presented in terms of the number
of pairwise comparison of function values, i.e., query complexity. 
The algorithm iteratively solves one dimensional optimization problems like the coordinate
descent method. However, practical performance of the optimization algorithm was not
studied in that work. 




In this paper, we focus on optimization algorithms using the pairwise comparison oracle. 
In our algorithm, the convergence to the optimal solution is guaranteed, when the number
of pairwise comparison tends to infinity. 
Our algorithm is regarded as a block coordinate descent method consisting of two steps:
the direction estimate step and search step. 
In the direction estimate step, the search direction is determined. 
In the search step, the current solution is updated along the search direction with an
appropriate step length. 
In our algorithm, the direction estimate step is easily parallelized. 
Therefore, it is expected 
that our algorithm effectively works even in large-scale optimization problems. 

Let us summarize the contributions presented in this paper. 
\begin{enumerate}
	\item We propose a block coordinate descent algorithm based on the pairwise
	      comparison oracle, and point out that the algorithm is easily parallelized. 
	\item We derive an upper bound of the convergence rate 
	      in terms of the number of pairwise
	      comparison of function values, i.e., query complexity. 
	\item We show a practical efficiency of our algorithm through numerical experiments. 
\end{enumerate}

 


The rest of the paper is organized as follows. In Section~\ref{pre}, we explain the problem setup and give 
some definitions. Section~\ref{mai} is devoted to the main results. 
The convergence properties and query complexity of our algorithm are shown in the section. 
In Section~\ref{num}, numerical examples are reported. 
Finally in Section~\ref{con}, we conclude the paper with the discussion on future works. 
All proofs of theoretical results are found in appendix. 

\section{Preliminaries}
\label{pre}
In this section, we introduce the problem setup and prepare some definitions and notations used throughout the paper. 
A function $f : \mathbb{R}^n \rightarrow \mathbb{R}$ is said to be $\sigma$-strongly convex on $\mathbb{R}^n$ 
for a positive constant $\sigma$, if for all $\x, \y \in \mathbb{R}^n$, the inequality 
\[
f(\y) \geq f(\x) + \nabla f(\x)^{T} (\y-\x) + \frac{\sigma}{2} \|\x-\y\|^2
\label{eqP1}
\]
holds, where $\nabla f(\x)$ and $\|\cdot\|$ denote the gradient of $f$ at $\x$ and the euclidean norm, respectively. 
The function $f$ is $L$-strongly smooth for a positive constant $L$,  
if $\|\nabla f(\x) -\nabla f(\y)\| \le{}L\|\x -\y\|$ holds for all $\x,\y \in \mathbb{R}^n$. 
The gradient $\nabla{f(\x)}$ of the $L$-strongly smooth function $f$ is referred to as $L$-Lipschitz gradient. 
The class of $\sigma$-strongly convex and $L$-strongly smooth functions on $\Rbb^n$ is denoted as 
$\mathcal{F}_{\sigma,L}(\R^n)$. 
In the convergence analysis, mainly we focus on the optimization of objective functions in
$\mathcal{F}_{\sigma,L}(\R^n)$. 

We consider the following pairwise comparison oracle defined in \cite{jamieson12:_query_compl_deriv_free_optim}. 
\begin{df}[Pairwise comparison oracle]
The stochastic pairwise comparison (PC) oracle is a binary valued  random variable
$O_{f} : \mathbb{R}^n \times \mathbb{R}^n \rightarrow \{\ -1, +1 \}$ defined as 
\begin{equation}
\mathrm{Pr}[O_{f}(\x,\y) = {\rm sign} \{f(\y) - f(\x) \}] \ge \frac{1}{2} 
+ \min \{\delta_0, \mu |f(\y) - f(\x)|^{\kappa-1} \}, 
\label{eqP3}
\end{equation}
where $0 < \delta_0 \le 1/2$, $\mu > 0$ and $\kappa \ge 1$. 
For $\kappa = 1$, without loss of generality $\mu \le \delta_0 \le 1/2$ is assumed. 
When the equality 
\begin{equation}
\label{eqP4}
\mathrm{Pr}[O_{f}(\x,\y) = {\rm sign} \{f(\y) - f(\x) \}] = 1
\end{equation}
is satisfied for all $\x$ and $\y$, we call $O_{f}$ the deterministic PC oracle.
\end{df}
For $\kappa = 1$, the probability in (\ref{eqP3}) is not affected by the difference
$|f(\y) - f(\x)|$, meaning that the probability for the output of the PC oracle is not
changed under any monotone transformation of $f$.   

In~\cite{jamieson12:_query_compl_deriv_free_optim}, 
Jamieson et al. derived lower and upper bounds of convergence rate of an optimization
algorithm using the stochastic PC oracle. The algorithm is referred to as 
the original PC algorithm in the present paper. 

Under above preparations, our purpose is to find the minimizer $\x^*$ of the objective
function $f(\x)$ in $\mathcal{F}_{\sigma, L}(\R^n)$ by using PC oracle. In the following section, we provide a DFO algorithm 
in order to solve the optimization problem and consider the convergence properties including query complexity. 


\section{Main Results}
\label{mai}
\subsection{Algorithm}
\begin{algorithm}[t]
   \caption{Block coordinate descent using PC oracle (BlockCD[$n$, $m$]) }
   \label{alg1}
\begin{algorithmic}
   \STATE {\bf Input:} initial point $\x_0 \in \mathbb{R}^n$, and accuracy in line search $\eta>0$. 
 \STATE {\bf Initialize:} set $t=0$. 
   \REPEAT
   \STATE Choose $m$ coordinates ${i_1},\ldots,{i_m}$ out of $n$ coordinates
 according to the uniform distribution. 
   \STATE{\bf (Direction estimate step)}
 \STATE[Step~D-1] Solve the one-dimension optimization problems
 \begin{align}
  \label{eqn:one_dim_opt_BlockCD}
  \min_{\alpha\in\Rbb} f(\x_{t} + \alpha \e_{i_k}),\quad k=1,\ldots,m,
 \end{align}
 within the accuracy $\eta/2$ using the PC-based line search algorithm shown in Algorithm~\ref{alg2}, 
 where $\e_{i}$ denotes the $i$-th unit basis vector. 
 Then, obtain the numerical solutions $\alpha_{t,i_k},k=1,\ldots,m$. 
 \STATE[Step~D-2] Set $\d_{t} = \sum_{k=1}^{m} \alpha_{t, i_k}\e_{i_k}$. 
 If $\d_t$ is the zero vector, add $\eta/2$ to $d_{i_1}$. 
 \STATE{\bf (Search step)}
 \STATE[Step~S-1] 
 Apply Algorithm~\ref{alg2}
 to obtain a numerical solution $\beta_{t}$ of 
 \[
 \min_{\beta} f(\x_{t} + \beta \d_{t}/\|\d_t\|)
 \]
 within the accuracy $\eta$. 
   \STATE[Update] $\x_{t+1} = \x_{t} + \beta_{t} \d_{t}/\|\d_t\|;\ t\leftarrow{t+1}$. 
   \UNTIL{A stopping criterion is satisfied.}
   \STATE {\bf Output:} $\x_{t}$
\end{algorithmic}
\end{algorithm}

In Algorithm~\ref{alg1}, we propose a 
DFO 
algorithm based on the PC oracle. 
In our algorithm, 
$m$ coordinates out of $n$ elements are updated in each iteration to efficiently cope with high dimensional
problems. 
Algorithm~\ref{alg1} is referred to as $\mathrm{BlockCD}[n,m]$. 
The original PC algorithm is recovered by setting $m=1$. 
The PC oracle is used in the line search algorithm to solve one-dimensional
optimization problems; the detailed line search algorithm is shown in
Algorithm~\ref{alg2}. 

For $m=n$, the search direction $\d_t$ in Algorithm~\ref{alg1} approximates that of a
modified Newton method~\cite[Chap.\,10]{luenberger08:_linear_and_nonlin_progr}, as shown below. 
In Step~D-1 of the algorithm, one-dimensional optimization problems \eqref{eqn:one_dim_opt_BlockCD} are solved. 
Let $\alpha_{t,i}^*$ be the optimal solution of \eqref{eqn:one_dim_opt_BlockCD} with $i_k=i$. 
Then, 
$\alpha_{t,i}^*$ will be close to the numerical solution $\alpha_{t,i}$. The Taylor expansion of the 
objective function leads to
\begin{align*}
f(\x_t+\alpha\e_{i})
=
f(\x_t)+\alpha\e_{i}^T\nabla{f}(\x_t)+\frac{\alpha^2}{2}\e_{i}^T\nabla^2{f}(\x_t)\e_{i}+o(\alpha^2), 
\end{align*}
where $\nabla^2{f}(\x_t)$ is the Hessian matrix of $f$ at $\x_t$. 
When the point $\x_t$ is close to the optimal solution of $f(\x)$,
the optimal parameter $\alpha_{i,t}^*$ will be close to zero, implying
that the higher
order term $o(\alpha^2)$ in the above is negligible.
Hence, $\alpha_{t,i}$ is approximated by the optimal solution of the
quadratic approximation, i.e., 
$-(\nabla{f}(\x_t))_i/(\nabla^2f(\x_t))_{ii}$.
As a result, 
the search direction in $\mathrm{BlockCD}[n,n]$ is approximated by
$-({\mathrm{diag}}(\nabla^2{f}(\x_t)))^{-1}\nabla{f}(\x_t)$, where
$\mathrm{diag}(A)$ denotes the diagonal
matrix, the diagonal elements of which are those of the square matrix~$A$.
In the modified Newton method, the Hessian matrix in the Newton method
is replaced with a positive definite matrix to
reduce the computation cost. Using only the diagonal part of the
Hessian matrix is a popular choice in the modified
Newton method.
\begin{algorithm}[t]
   \caption{line search algorithm using PC oracle \cite{jamieson12:_query_compl_deriv_free_optim}}
   \label{alg2}
\begin{algorithmic}
   \STATE {\bf Input:} current solution $\x_t \in \mathbb{R}^n$, search direction $\d \in \mathbb{R}^n$ 
   and accuracy in line search $\eta>0$. 
 \STATE {\bf Initialize:} set $\alpha_0 = 0$,  $\alpha^{+}_0 = \alpha_0 + 1$, $\alpha^{-}_0 = \alpha_0 - 1$, $k=0$. 
   \STATE {\bf [Step1]} 
    \IF {$O_{f}(\x_t, \x_t + \alpha^{+}_0 \d) > 0$ and $O_{f}(\x_t, \x_t + \alpha^{-}_0 \d) < 0$}
   \STATE $\alpha^{+}_0 \leftarrow 0$
   \ELSIF{$O_{f}(\x_t, \x_t + \alpha^{+}_0 \d) < 0$ and $O_{f}(\x_t, \x_t + \alpha^{-}_0 \d) > 0$}
   \STATE $\alpha^{-}_0 \leftarrow 0$
   \ENDIF
 \STATE {\bf [Step2]} (double-sign corresponds)
   \WHILE{$O_{f}(\x_t, \x_t + \alpha^{\pm}_k \d) < 0$}
   \STATE $\alpha^{\pm}_{k+1} \leftarrow 2\alpha^{\pm}_k$, $k \leftarrow k+1$
   \ENDWHILE
 \STATE {\bf [Step3]} 
   \WHILE{$|\alpha^{+}_k - \alpha^{-}_k| > \eta/2$}
   \IF{$O_{f}(\x_t + \alpha_k \d, \x_t + \frac{1}{2}(\alpha_k + \alpha^{+}_k) \d) < 0$} 
   \STATE $\alpha_{k+1} \leftarrow \frac{1}{2}(\alpha_k + \alpha^{+}_k)$, 
   			$\alpha^{+}_{k+1} \leftarrow \alpha^{+}_{k}$, $\alpha^{-}_{k+1} \leftarrow \alpha_{k}$
   \ELSIF{$O_{f}(\x_t + \alpha_k \d, \x_t + \frac{1}{2}(\alpha_k + \alpha^{-}_k) \d) < 0$} 
   \STATE $\alpha_{k+1} \leftarrow \frac{1}{2}(\alpha_k + \alpha^{-}_k)$, 
   			$\alpha^{-}_{k+1} \leftarrow \alpha^{-}_{k}$, $\alpha^{+}_{k+1} \leftarrow \alpha_{k}$
   \ELSE 
   \STATE (double-sign corresponds)
   \STATE $\alpha_{k+1} \leftarrow \alpha_{k}$, 
   			$\alpha^{\pm}_{k+1} \leftarrow \frac{1}{2}(\alpha_k + \alpha^{\pm}_k)$
   \ENDIF
   \ENDWHILE
   \STATE {\bf Output:} $\alpha_{t}$
\end{algorithmic}
\end{algorithm}

Figure~\ref{fg1} 
demonstrates an example of the optimization process of both the original PC algorithm and our algorithm.
The original PC algorithm updates the numerical solution along a randomly chosen
coordinate in each iteration. 
Hence, many iterations are required to get close to the optimal solution. 
On the other hand, in our algorithm, a solution can move along a oblique direction. 
Therefore, our algorithm can get close to the optimal solution with less iterations than the original PC algorithm.

\begin{figure}[t]
 \begin{center}
 \includegraphics[scale=0.4]{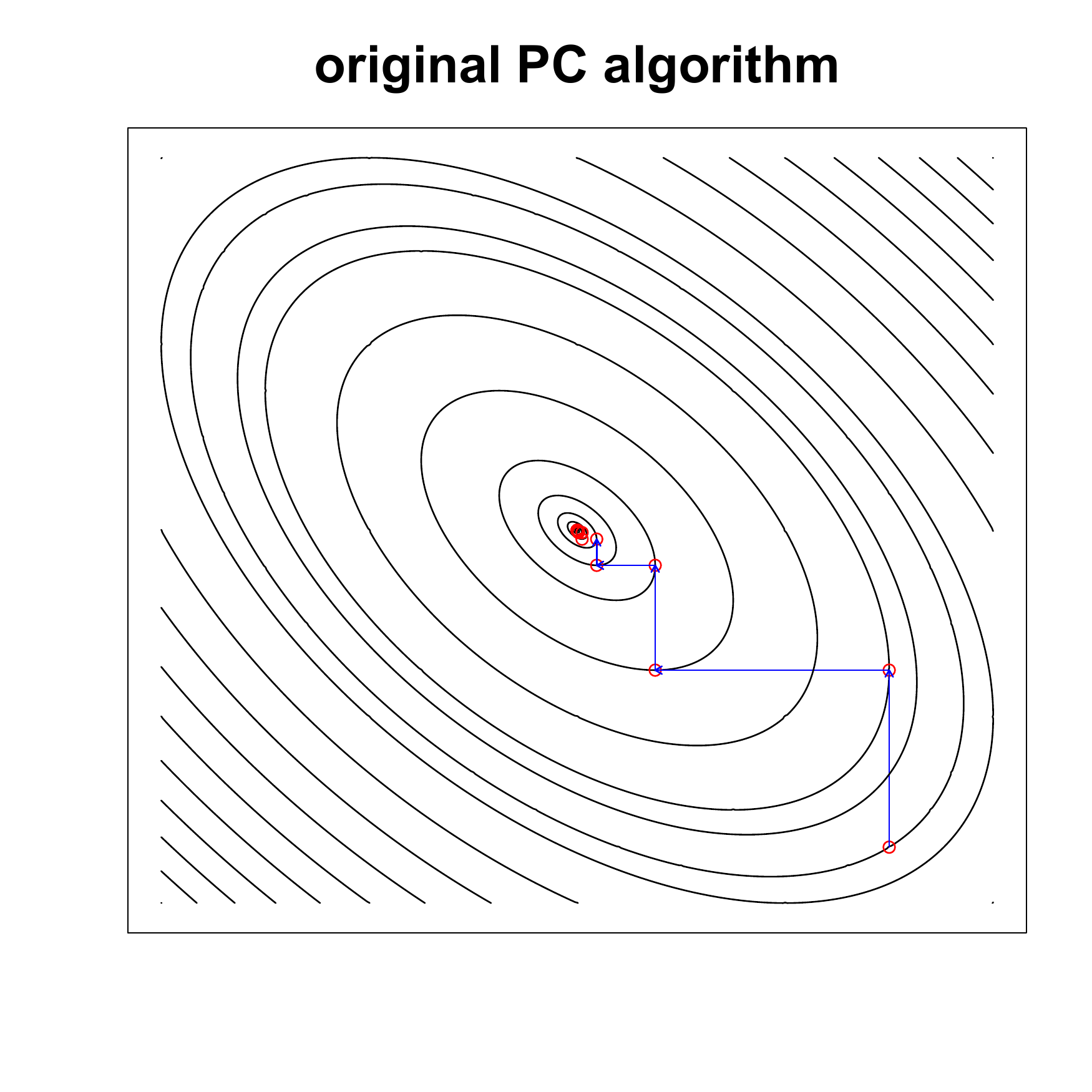}\quad 
 \includegraphics[scale=0.4]{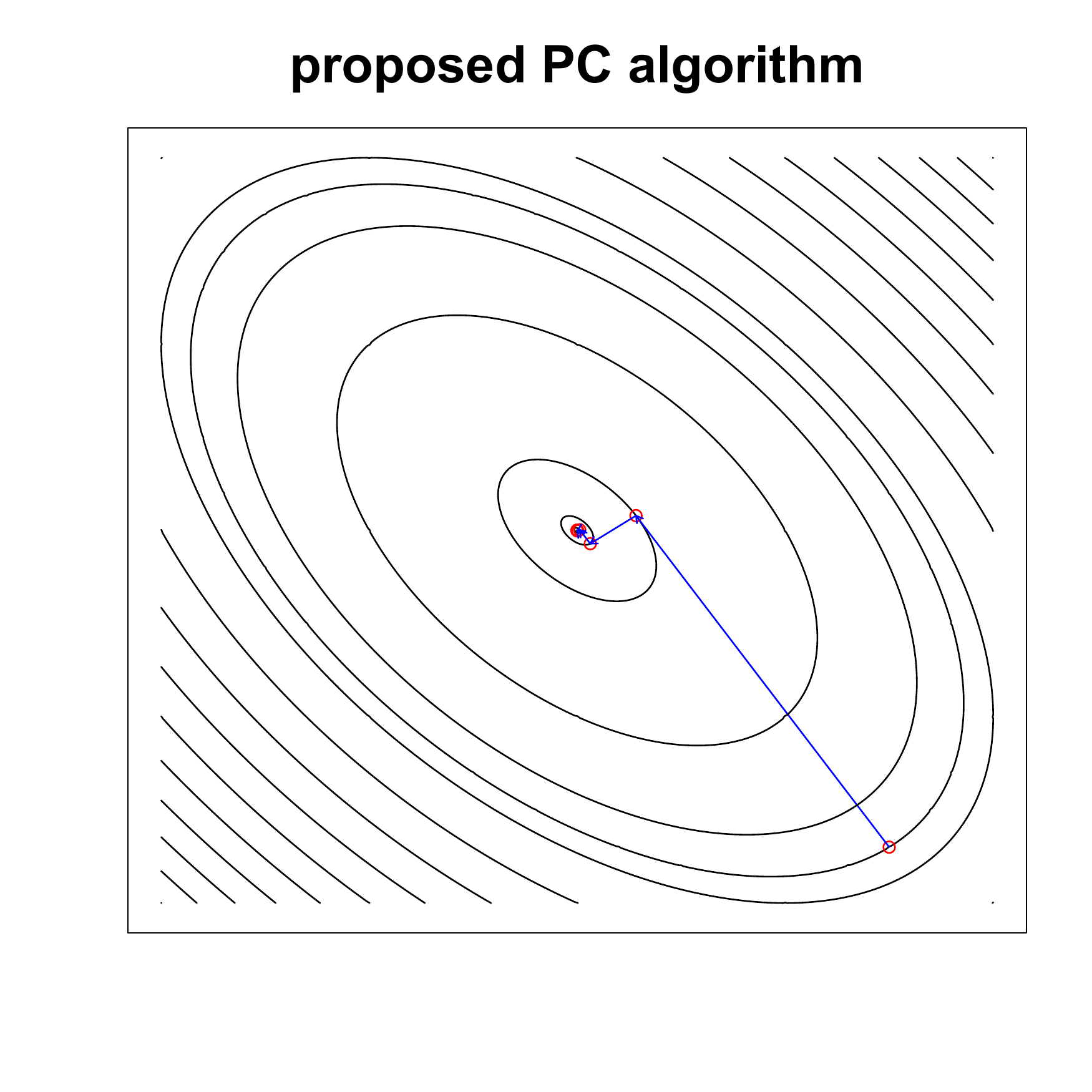}
 \end{center}
 \vspace*{-10mm}
\caption{A behavior of the algorithms on the contour of the quadratic objective function 
$x_{1}^2 + x_{2}^2 + x_{1}x_{2}$ with same initialization. 
Left panel: Jamieson {et al.}'s original PC algorithm. 
Right panel: proposed algorithm. }
\label{fg1}
\end{figure}

\subsection{Convergence Properties of our Algorithm under Deterministic Oracle}
We now provide an upper bound of the convergence rate of our algorithm using the deterministic PC oracle \eqref{eqP4}. 
Let us denote the minimizer of $f$ as $\x^*$. 
\begin{theo}
\label{upper}
 Suppose $f\in\mathcal{F}_{\sigma,L}(\mathbb{R}^n)$, and 
 define $\gamma$ and $\varepsilon$ be
 \begin{align*}
  \gamma=\frac{\sigma/L}{53}\left(\frac{1-\sqrt{1-\sigma/L}}{1+\sqrt{1-\sigma/L}}\right)^2,\quad
    \varepsilon=
  \frac{8nL^2}{\sigma}\bigg(1+\frac{n}{m\gamma}\bigg)\eta^2. 
 \end{align*}
 Let us define $T_0$ be
 \begin{align}
  \label{eqn:def_T0}
  T_0=
  \bigg\lceil
  \frac{n}{m\gamma}\log\frac{(f(\x_0)-f(\x^*))(1+\frac{n}{m\gamma})}{\varepsilon}
  \bigg\rceil. 
 \end{align}
 For $T\geq{T_0}$, we have $\mathbb{E}[f(\x_T)-f(\x^*)]\leq{}\varepsilon$, 
 where the expectation is taken with respect to the random choice of coordinates $i_1,\ldots,i_k$ to be updated in 
 $\mathrm{BlockCD}[n,m]$. 
\end{theo}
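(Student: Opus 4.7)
The plan is to prove a one-iteration descent lemma conditionally on the random coordinate set $S=\{i_1,\ldots,i_m\}$, average over $S$ to obtain a linear contraction with an $O(\eta^2)$ additive floor, and then unroll the recursion $T$ times to read off $T_0$. This is the standard template for randomized block-coordinate analyses; the new work is pushing the PC-based bisection line search and the diagonal-Newton-like aggregation through that template while keeping explicit constants.

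First I would analyse Algorithm~\ref{alg2}. Under the deterministic oracle it behaves exactly like bisection on a strongly convex univariate restriction, so its output $\hat\alpha$ satisfies $|\hat\alpha-\alpha^*|\le\eta/2$ and $L$-smoothness yields $f(\x_t+\hat\alpha\d)-\min_\beta f(\x_t+\beta\d)\le \tfrac{L}{8}\eta^2\|\d\|^2$. Conditioning on $S$, each $\alpha_{t,i_k}$ is within $\eta/2$ of the exact 1-D minimiser $\alpha_{t,i_k}^*$, and strong convexity of the coordinate slice gives $|\alpha_{t,i_k}^*|\ge|\partial_{i_k}f(\x_t)|/L$. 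From this I would show that the aggregated direction $\d_t=\sum_k\alpha_{t,i_k}\e_{i_k}$ is a descent direction satisfying
\begin{align*}
\nabla f(\x_t)^\top \d_t \;\le\; -c_1\,\frac{\|\nabla_{\!S}f(\x_t)\|^2}{L}+O\!\bigl(\eta\,\|\nabla_{\!S}f(\x_t)\|\bigr),
\end{align*}
where $\nabla_{\!S}f$ is the subvector indexed by $S$. Plugging this into the $\eta$-accurate line search along $\d_t/\|\d_t\|$ and using $L$-smoothness once more gives a conditional descent of the form
\begin{align*}
f(\x_{t+1})\;\le\; f(\x_t)-c_2\,\frac{\|\nabla_{\!S}f(\x_t)\|^2}{L}+C_1L\eta^2,
\end{align*}
where $c_2$ absorbs the characteristic factor $\bigl((1-\sqrt{1-\sigma/L})/(1+\sqrt{1-\sigma/L})\bigr)^2$ arising from a Kantorovich-type bound on the alignment between $\d_t$ and $-\nabla f(\x_t)$ induced by the diagonal-Hessian approximation.

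Averaging over the uniform choice of $S$ replaces $\|\nabla_{\!S}f(\x_t)\|^2$ by $(m/n)\|\nabla f(\x_t)\|^2$, and strong convexity gives $\|\nabla f(\x_t)\|^2\ge 2\sigma(f(\x_t)-f(\x^*))$, so the descent becomes
\begin{align*}
\Ebb[f(\x_{t+1})-f(\x^*)]\;\le\;\Bigl(1-\tfrac{m\gamma}{n}\Bigr)\Ebb[f(\x_t)-f(\x^*)]+C_2\eta^2,
\end{align*}
with $\gamma$ matching the stated form (the $1/53$ collects the various absorbing constants). Iterating and bounding the geometric sum yields
\begin{align*}
\Ebb[f(\x_T)-f(\x^*)]\;\le\;\Bigl(1-\tfrac{m\gamma}{n}\Bigr)^{T}(f(\x_0)-f(\x^*))+\tfrac{nC_2}{m\gamma}\eta^2,
\end{align*}
and the definitions of $\varepsilon$ and $T_0$ are calibrated precisely so that for $T\ge T_0$ the geometric term is at most the additive floor, making their sum at most $\varepsilon$. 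The main obstacle will be the conditional descent lemma: tracking two layers of approximate line searches simultaneously, keeping the $\eta^2$ error strictly additive (rather than entangled with $\|\nabla f\|^2$), and extracting the sharp descent constant without losing the Kantorovich-type factor. A secondary delicate point is the degenerate branch in Step~D-2 where $\d_t=\0$ is perturbed by $\eta/2$ in one coordinate, which must be shown not to spoil the descent budget.
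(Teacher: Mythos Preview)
Your overall architecture—conditional one-step descent, average over the random block, unroll a linear recursion—is exactly the template the paper follows, and your localisation of the 1-D minimisers $\alpha_{t,i_k}^*$ is in the right spirit. The gap is in the step you yourself flag as ``the main obstacle'': the conditional inequality
\[
f(\x_{t+1})\le f(\x_t)-c_2\,\frac{\|\nabla_{S} f(\x_t)\|^2}{L}+C_1L\eta^2
\]
with a dimension-free $C_1$ is not obtainable, and the subsequent sentence ``averaging over the uniform choice of $S$ replaces $\|\nabla_{S} f\|^2$ by $(m/n)\|\nabla f\|^2$'' is precisely where the difficulty hides rather than where it is resolved. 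The descent produced by the search step is the ratio $|\nabla f(\x_t)^T\d_t|^2/(2L\|\d_t\|^2)$, and \emph{both} numerator and denominator carry an additive $\sqrt{m}\,\eta$ contamination. When the random block happens to hit coordinates with $\|\nabla_{S} f(\x_t)\|$ comparable to $\sqrt{m}\,\eta/c_0$, the ratio can collapse to zero, so you cannot extract $c_2\|\nabla_{S} f\|^2$ pointwise with only an $O(\eta^2)$ remainder; the quantity to be averaged is a genuinely nonlinear function of $\|\nabla_{S} f\|$, not $\|\nabla_{S} f\|^2$ itself.

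The paper does not attempt a conditional bound at this point. It first splits on the event $f(\x_t)-f(\x^*)\ge\varepsilon'$, which is what allows $\eta$ to be bounded by $c_0\|\nabla f(\x_t)\|/(2\sqrt{n})$ (and is why $\varepsilon$ carries the factor $8nL^2/\sigma$). After this substitution the ratio becomes a function of the single random variable $Z=\sqrt{n/m}\,\|\nabla_{S} f(\x_t)\|/\|\nabla f(\x_t)\|$, and the expectation is taken \emph{directly} on that nonlinear function. The crux is an auxiliary lemma: for any nonnegative $Z$ with $\Ebb[Z^2]=1$,
\[
\Ebb\!\left[Z^2\,\frac{[Z-1/2]_+^2}{(Z+1/2)^2}\right]\ge\frac{1}{53},
\]
proved by an indicator-truncation argument. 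This lemma is exactly where the constant $1/53$ in $\gamma$ originates, and it is the ingredient your plan is missing. A minor side remark: the factor $\bigl((1-\sqrt{1-\sigma/L})/(1+\sqrt{1-\sigma/L})\bigr)^2$ is not a Kantorovich bound but simply $(c_0/c_1)^2$, where $c_0,c_1=(1\mp\sqrt{1-\sigma/L})/\sigma$ are the endpoints obtained by sandwiching the exact coordinate minimiser between the minimisers of the lower and upper quadratic envelopes $g_{\mathrm{low}},g_{\mathrm{up}}$.
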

The proof of Theorem~\ref{upper} is given in~\ref{proof.upper}. 
Note that any monotone transformation of the objective function does not affect the output of the deterministic PC
oracle. Hence, the theorem above holds even for the function $f(\x)$ such that the
composite function with a monotone function is included in $\mathcal{F}_{\sigma,L}(\R^n)$. 



\subsection{Query Complexity}
Let $\hat{\x}_{Q}$ be the output of BlockCD$[n,m]$ after $Q$ pairwise comparison queries. 
To solve the one dimension optimization problem within the accuracy $\eta/2$, 
the sufficient number of the call of PC-oracle is 
\begin{align*}
 K_0=2\log_2\frac{2^{10}L(f(\x_0)-f(\x^*))}{\sigma^2\eta^2}, 
\end{align*}
as shown in \cite{jamieson12:_query_compl_deriv_free_optim}. 
Hence, if the inequality $Q\geq{}T_0K_0(m+1)$ 
holds, 
Theorem~\ref{upper} assures that 
the numerical solution $\hat{\x}_{Q}$ based on $Q$ queries satisfies
\[
\mathbb{E}[f(\hat{\x}_{Q})-f(\x^*)]\leq\varepsilon. 
\]
When $n/\varepsilon$ is sufficiently large, we have
\begin{align*}
 &\phantom{=} T_0K_0(m+1)\\
 &=
 (m+1) 
 \bigg\lceil
  \frac{n}{m\gamma}\log\frac{(f(\x_0)-f(\x^*))(1+\frac{n}{m\gamma})}{\varepsilon}
 \bigg\rceil\cdot
 \bigg\lceil
 2\log_2\frac{2^{10}L(f(\x_0)-f(\x^*))}{\sigma^2\eta^2}
 \bigg\rceil\\
 &=
 (m+1)  \bigg\lceil
 \frac{n}{m\gamma}\log\frac{\Delta_0(1+\frac{n}{m\gamma})}{\varepsilon}
 \bigg\rceil\cdot
  \bigg\lceil
 2\log_2\frac{2^{13}(L/\sigma)^3\Delta_0n(1+\frac{n}{m\gamma})}{\varepsilon}
 \bigg\rceil\\
 &\leq
 c_0 n\left(\log\frac{n}{\varepsilon}\right)^2
 \leq 
 c_0 n(\log{n})^2\left(\log\frac{1}{\varepsilon}\right)^2, 
\end{align*}
where $c_0$ is a constant depending  on $L/\sigma$ and $\Delta_0=f(\x_0)-f(\x^*)$. 
The last inequality holds if $\log{n}$ and $\log(1/\varepsilon)$ are both greater 
than $2$. 
Eventually we have
\begin{align}
\label{eq:quecom}
 \Ebb[f(\hat{\x}_{Q})-f(\x^*)]\leq{}
 \exp\left\{-\frac{c}{\log{n}}\sqrt{\frac{Q}{n}}\right\}, 
\end{align}
where $c=1/\sqrt{c_0}$. 
The above bound is of the same order of the convergence rate 
for the original PC algorithm 
up to polylog factors. On the other hand, a lower bound presented
in~\cite{jamieson12:_query_compl_deriv_free_optim} is of order $e^{-cQ/n}$ with a positive constant $c$ up to polylog
factors, when the PC oracle with $\kappa=1$ is used. 


In Theorem \ref{upper}, it is  assumed that the objective function is strongly 
convex and gradient Lipschitz.
In a realistic situation, 
we usually do not have the knowledge of the class parameters $\sigma$ and $L$ of the unknown objective function.
Moreover, strong convexity and gradient Lipschitzness  on the whole space $\R^n$ is too strong.
In the following corollary, we relax the assumption in Theorem \ref{upper} and prove the
convergence property of our algorithm without strong convexity and strong smoothness. 
\begin{cor}
\label{convergence}
Let $f:\R^n\to\R$ be a twice continuously differentiable convex function  with non-degenerate Hessian on $\R^n$ and 
$\x^*$ be a minimizer of $f$.
Then, there is a constant $c$ such that the output $\hat{x}_Q$ of BlockCD[n,m] satisfies (\ref{eq:quecom}).
\end{cor}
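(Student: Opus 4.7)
The plan is to reduce Corollary~\ref{convergence} to Theorem~\ref{upper} by identifying an invariant compact set on which $f$ is effectively strongly convex and gradient Lipschitz, and then rerunning the argument of Theorem~\ref{upper} with those local parameters.

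First I would check that every sublevel set $\{\x:f(\x)\le c\}$ is compact. Since $\nabla^2 f$ is positive definite on $\Rbb^n$, the function $f$ is strictly convex, so its minimizer $\x^*$ is unique. Along any unit vector $\u$, the one-variable convex function $g(t)=f(\x^*+t\u)$ has its unique minimum at $t=0$, and the standard convexity estimate $g(t)\ge g(0)+t(g(1)-g(0))$ for $t\ge 1$ (and its analog for $t\le -1$) forces $g(t)\to\infty$ as $|t|\to\infty$. A standard recession-cone argument then shows that every sublevel set of $f$ is bounded, hence compact by continuity.

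Next I would fix a compact sublevel set $S\supseteq\{\x:f(\x)\le f(\x_0)+1\}$. On $S$ the continuous map $\nabla^2 f$ has uniformly bounded eigenvalues, so there exist $0<\sigma_S\le L_S<\infty$ with $\sigma_S I\preceq\nabla^2 f(\x)\preceq L_S I$ for every $\x\in S$. Integrating along segments that remain in $S$ then yields the strong convexity and gradient Lipschitz inequalities with constants $\sigma_S,L_S$ for every pair of points in $S$. Since each iteration of BlockCD[$n,m$] updates $\x_{t+1}=\x_t+\beta_t\d_t/\|\d_t\|$ where $\beta_t$ approximately minimizes $f$ along the ray through $\x_t$, we have $f(\x_{t+1})\le f(\x_t)+O(\eta)$; combining this with the descent obtained from the effective strong convexity on $S$ keeps the iterates inside $S$ throughout the run, provided $\eta$ is chosen small enough.

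Finally I would rerun the proof of Theorem~\ref{upper} with $\sigma_S$ and $L_S$ playing the roles of $\sigma$ and $L$. That proof invokes strong convexity and smoothness only at pairs of the form $(\x_t,\x^*)$ and $(\x_t,\x_t+\alpha\e_{i_k})$ arising in the line searches, and by the previous step these all lie in $S$, so the required inequalities apply. The query-complexity derivation following Theorem~\ref{upper} then reproduces~\eqref{eq:quecom} with a constant $c$ depending on $\sigma_S/L_S$ and $\Delta_0=f(\x_0)-f(\x^*)$. The main obstacle I expect is precisely the iterate invariance in $S$: the per-step line-search slack of order $\eta$ does not give strictly monotone decrease of $f(\x_t)$, so one must calibrate $\eta$ relative to the target accuracy $\varepsilon$ so that the accumulated slack over the $T_0$ iterations cannot drive any iterate outside $S$, which is where the effective constants $\sigma_S,L_S$ are valid.
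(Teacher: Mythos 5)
Your proposal is correct and takes essentially the same route as the paper: show the sublevel set containing the iterates is compact (via a radial growth bound coming from strict convexity), extract effective constants $\sigma$ and $L$ from the continuous Hessian eigenvalues on that compact set, and rerun the argument of Theorem~\ref{upper} with those constants. The only difference is that the paper relies on exact monotonicity $f(\x_{t+1})\le f(\x_t)$ of the PC-based line search (it starts at $\alpha=0$ and only moves when a comparison certifies decrease, as noted in the footnote to the proof of Theorem~\ref{upper}), so the iterates never leave $C(\x_0)=\{\x: f(\x)\le f(\x_0)\}$ and the $\eta$-calibration issue you flag for your enlarged set $S$ does not arise.
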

The proof of Corollary~\ref{convergence} is given in~\ref{proof.convergence}. 
\subsection{Generalization to Stochastic Pairwise Comparison Oracle}
\begin{algorithm}[tb]
   \caption{Repeated querying subroutine 
   (\cite{jamieson12:_query_compl_deriv_free_optim,kaariainen2006active})}
   \label{alg3}
\begin{algorithmic}
   \STATE {\bf Input:} $\x, \y \in \mathbb{R}^n$, $p = {\rm Pr} [O_{f}(\x, \y) = {\rm sign} \{f(\y) - f(\x) \}]$, $\delta > 0$
   \STATE {\bf Initialize:}
    set $n_0 = 1$ and toss the coin with probability $p$ of heads once. 
   \FOR{$k=0, 1, ...$}
     \STATE{ $p_k = \mbox{frequency of heads in all tosses so far}$} 
   \STATE{$I_k = \left[p_k - \sqrt{\frac{(k+1)\log(2/\delta)}{2^k}}, p_k + \sqrt{\frac{(k+1)\log(2/\delta)}{2^k}} \right]$} 
   \IF{$\frac{1}{2} \not \in I_k$} 
   \STATE{\bf{break}}
   \ELSE 
   \STATE{toss the coin $n_k$ more times, and set $n_{k+1} = 2n_k$.}
   \ENDIF
  \ENDFOR 
  \IF{$p_k + \sqrt{\frac{(k+1)\log(2/\delta)}{2^k}} \le \frac{1}{2}$}
  \STATE {{\bf return}} $-1$
  \ELSE
  \STATE{{\bf return}} $+1$
  \ENDIF
\end{algorithmic}
\end{algorithm}


In stochastic PC oracle, one needs to ensure that the correct information is obtained in high probability. 
In Algorithm~\ref{alg3}, the query $O_f(\x,\y)$ is repeated under the stochastic PC oracle. 
The reliability of line search algorithm based on stochastic PC oracle \eqref{eqP3} was investigated by 
\cite{jamieson12:_query_compl_deriv_free_optim,kaariainen2006active}. 
\begin{lem}
 [\cite{jamieson12:_query_compl_deriv_free_optim,kaariainen2006active}]
 \label{sto_line}
 For any $\x, \y \in \mathbb{R}^n$ with $p = {\rm Pr} [O_{f}(\x, \y) = {\rm sign} \{f(\y) - f(\x) \}]$, 
 the repeated querying subroutine in Algorithm~\ref{alg3} correctly identifies 
 the sign of $\Ebb[O_{f}(\x, \y)]$ with probability $1- \delta$, and requests no more than 
\begin{equation}
\label{eq:sto_que_line}
\frac{\log{2/\delta}}{4|1/2 - p|^2} \log_{2} \left(\frac{\log{2/\delta}}{4|1/2 - p|^2} \right)
\end{equation}
queries. 
\end{lem}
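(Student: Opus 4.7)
The plan is to handle the correctness claim (probability $1-\delta$ of returning the right sign) and the query-count bound (\ref{eq:sto_que_line}) separately. Both arguments rest on Hoeffding's inequality applied at each round together with a union bound across the doubling schedule. Let $q = \Pr[O_f(\x,\y) = +1]$, so that $|q - 1/2| = |1/2 - p|$ and $\mathrm{sign}(q - 1/2) = \mathrm{sign}(\Ebb[O_f(\x,\y)])$; the empirical frequency $p_k$ is then a direct estimator of $q$ built from at least $2^k$ i.i.d.\ Bernoulli trials by the end of round $k$.

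For correctness I would apply Hoeffding's inequality in round $k$ with confidence radius $r_k := \sqrt{(k+1)\log(2/\delta)/2^k}$, obtaining
\begin{equation*}
\Pr\bigl[|p_k - q| > r_k\bigr] \leq 2\exp\bigl(-2 \cdot 2^k \cdot r_k^2\bigr) = 2(\delta/2)^{2(k+1)}.
\end{equation*}
The $(k+1)$ factor inside the radius is exactly what makes these tail probabilities geometrically summable in $k$; a union bound (after a mild retuning of the numerical constant inside $r_k$, if needed) shows that $q \in I_k$ for every $k$ simultaneously with probability at least $1 - \delta$. On this good event, the stopping criterion $1/2 \notin I_k$ can only fire when $p_k$ and $q$ lie strictly on the same side of $1/2$, so the sign returned by Algorithm~\ref{alg3} agrees with $\mathrm{sign}(q - 1/2)$, which is the desired sign of $\Ebb[O_f(\x,\y)]$.

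For the query count I stay on the same good event and write $\Delta = |1/2 - p|$. Whenever $r_k < \Delta/2$ one has $|p_k - 1/2| \geq |q - 1/2| - |q - p_k| > \Delta - r_k > r_k$, forcing $1/2 \notin I_k$; thus the algorithm must stop by the smallest $K$ with $2^K \geq 4(K+1)\log(2/\delta)/\Delta^2$. A geometric sum then bounds the total number of oracle calls by $\sum_{j \leq K} n_j = 2^{K+1} - 1$. Inverting the implicit relation $2^K \sim 4K\log(2/\delta)/\Delta^2$ via the standard $K = \log x + \log\log x$ substitution gives $2^K = O\bigl(\tfrac{\log(2/\delta)}{\Delta^2}\log_2 \tfrac{\log(2/\delta)}{\Delta^2}\bigr)$, matching (\ref{eq:sto_que_line}).

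The main obstacle is purely bookkeeping: tuning the numerical constant inside $r_k$ so the union bound sums exactly to $\delta$ rather than some larger multiple, and inverting $2^K \geq C(K+1)\log(2/\delta)/\Delta^2$ cleanly enough to reproduce the explicit $\log\log$ form of (\ref{eq:sto_que_line}) with the stated constant $1/4$ in front. Neither step is deep, and since the result is attributed to \cite{jamieson12:_query_compl_deriv_free_optim,kaariainen2006active}, my argument would essentially mirror theirs.
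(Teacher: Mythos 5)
The paper does not actually prove Lemma~\ref{sto_line}; it is imported verbatim from the cited references, and your Hoeffding-plus-union-bound argument over the doubling schedule (with the $(k+1)$ factor making the per-round failure probabilities geometrically summable, and the stopping criterion forced once the confidence radius drops below $|1/2-p|/2$) is exactly the standard proof of K\"a\"ari\"ainen and Jamieson et al., so your approach is correct and matches the intended one. The only loose end is the one you already flag: the rounding $2^K\le 8(K+1)\log(2/\delta)/|1/2-p|^2$ in your sketch does not literally reproduce the constant $1/4$ in (\ref{eq:sto_que_line}) without redoing the cited works' exact bookkeeping, but this affects only absolute constants, not the $\frac{\log(2/\delta)}{|1/2-p|^2}\log_2\frac{\log(2/\delta)}{|1/2-p|^2}$ form used later in the paper.
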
 
It should be noted here that, in this paper,  ${\rm sign} \{E[O_f(x,y)]\}= {\rm sign} \{f(y)-f(x)\}$ always holds because 
$p>1/2$ from (\ref{eqP3}). 
In \cite{jamieson12:_query_compl_deriv_free_optim}, a modified line search algorithm using a ternary search instead of 
bisection search was proposed to lower bound $|1/2 - p|$ in Lemma \ref{sto_line} for arbitrary $\x, \y \in \mathbb{R}^n$. 
Then, one can find that the total number of queries required by a repeated querying subroutine algorithm is at most  
$\tilde{O} \left(\frac{\log{1/\delta}}{\eta^{4(\kappa - 1)}} \right)$, where $\eta$ is an accuracy of line search.
The query complexity of the stochastic PC oracle is obtained from that of the deterministic PC oracle. 
Suppose that one has $Q_0$ responses from the deterministic PC oracle. To
obtain the same responses from the stochastic PC oracle with
probability more than $1 - \delta$, one needs more than $\tilde{O}(Q_0 \eta^{-4(\kappa - 1)}
\log(\frac{Q_{0}}{\delta}))$ queries. 
From the above discussion, we have the following upper bounds for stochastic setting: 
\begin{equation}
 \Ebb[f(\hat{\x}_{Q})-f(\x^*)] \le 
  \begin{cases}
   \displaystyle
   \exp \left \{-\frac{c_{1}}{\log{n}} \sqrt{\frac{Q}{n}} \right \}, & \kappa = 1,  \vspace*{2mm} \\ 
   \displaystyle
   c_{2} \displaystyle \frac{n^2}{m}  \left(\frac{n}{Q} \right)^{1/(2\kappa - 2)}, & \kappa > 1, 
  \end{cases}
  \label{eq:sto_compl}
\end{equation}
where $c_1$ and $c_2$ are constant depending  on $L/\sigma$ and $f(\x_0) - f(\x^*)$ as well as $1/ \delta$ poly-logarithmically. 
If $m$ and $n$ are of the same order in the case of $\kappa > 1$, 
the bound \eqref{eq:sto_compl} 
coincides with that shown in Theorem~2 of \cite{jamieson12:_query_compl_deriv_free_optim}.

\section{Numerical Experiments}
\label{num}
In this section, we present numerical experiments in which the proposed method 
in Algorithm~\ref{alg1} was mainly compared with the Nelder-Mead 
algorithm~\cite{nelder65:_simpl_method_funct_minim} and 
the original PC algorithm~\cite{jamieson12:_query_compl_deriv_free_optim}, i.e.,
BlockCD$[n,1]$ of Algorithm~\ref{alg1}. 
Here, the PC oracle was used in all the optimization algorithms. 
In BlockCD$[n,m]$ with $m\geq2$, one can execute the line search algorithm to each axis separately. 
Hence, the parallel computation is directly available to find the components of the search
direction $\d_t$. 
Also, we investigated the computation efficiency of the parallel implementation of our method. 
The numerical experiments were conducted on AMD Opteron Processor 6176 (2.3GHz) with 48 cores, 
running Cent OS Linux release 6.4. We used the R language~\cite{team14:_r} with {\tt snow}
library for parallel statistical computing.

\subsection{Two Dimensional Problems}
It is well-known that the Nelder-Mead method efficiently works in low dimensional problems. 
Indeed, in our preliminary experiments for two dimensional optimization problems, 
the Nelder-Mead method showed a good convergence property compared to the other methods such as 
BlockCD$[2,m]$ with $m=1,2$. 
Numerical results are presented in Figure~\ref{fig:2_dim_problems}. 
We tested optimization methods on the quadratic function $f(x)=x^TAx$, and  two-dimension Rosenbrock function,
$f(x)=(1-x_1)^2+100(x_{2}-x_1^2)^2$, where the matrix $A$ was a randomly generated 2 by 2 positive definite matrix.  
In two dimension problems, we do not use the parallel implementation of our method, 
since clearly the parallel computation is not efficient that much. 
The efficiency of parallel computation is canceled by the communication overhead. 
In our method, the accuracy of the line search is fixed to a small positive number
$\eta$. Hence, the optimization process stops on the way to the optimal solution, 
as shown in the left panel of Fig.~\ref{fig:2_dim_problems}. 
On the other hand, the Nelder-Mead method tends to converge to the optimal solution in high accuracy.
In terms of the convergence speed for the optimization of two-dimensional quadratic function, there is no
difference between the Nelder-Mead method and BlockCD method, until the latter stops due to the limitation of
the numerical accuracy. Even in non-convex Rosenbrock function, the Nelder-Mead method works well compared to
the PC-based BlockCD algorithm. 

\begin{figure}[tp]
 \begin{center}
 \begin{tabular}{cc}
  \includegraphics[scale=0.4]{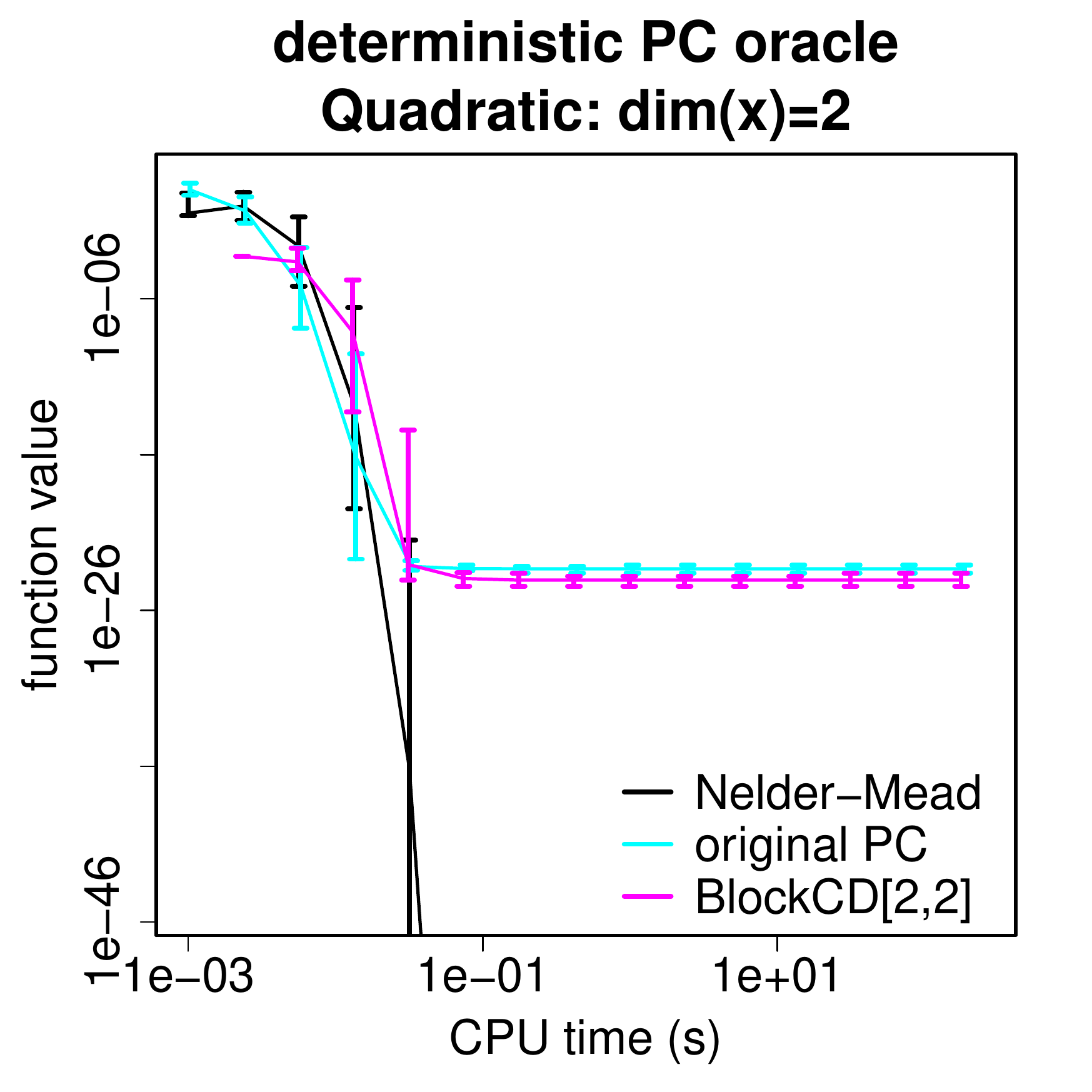}&
  \includegraphics[scale=0.4]{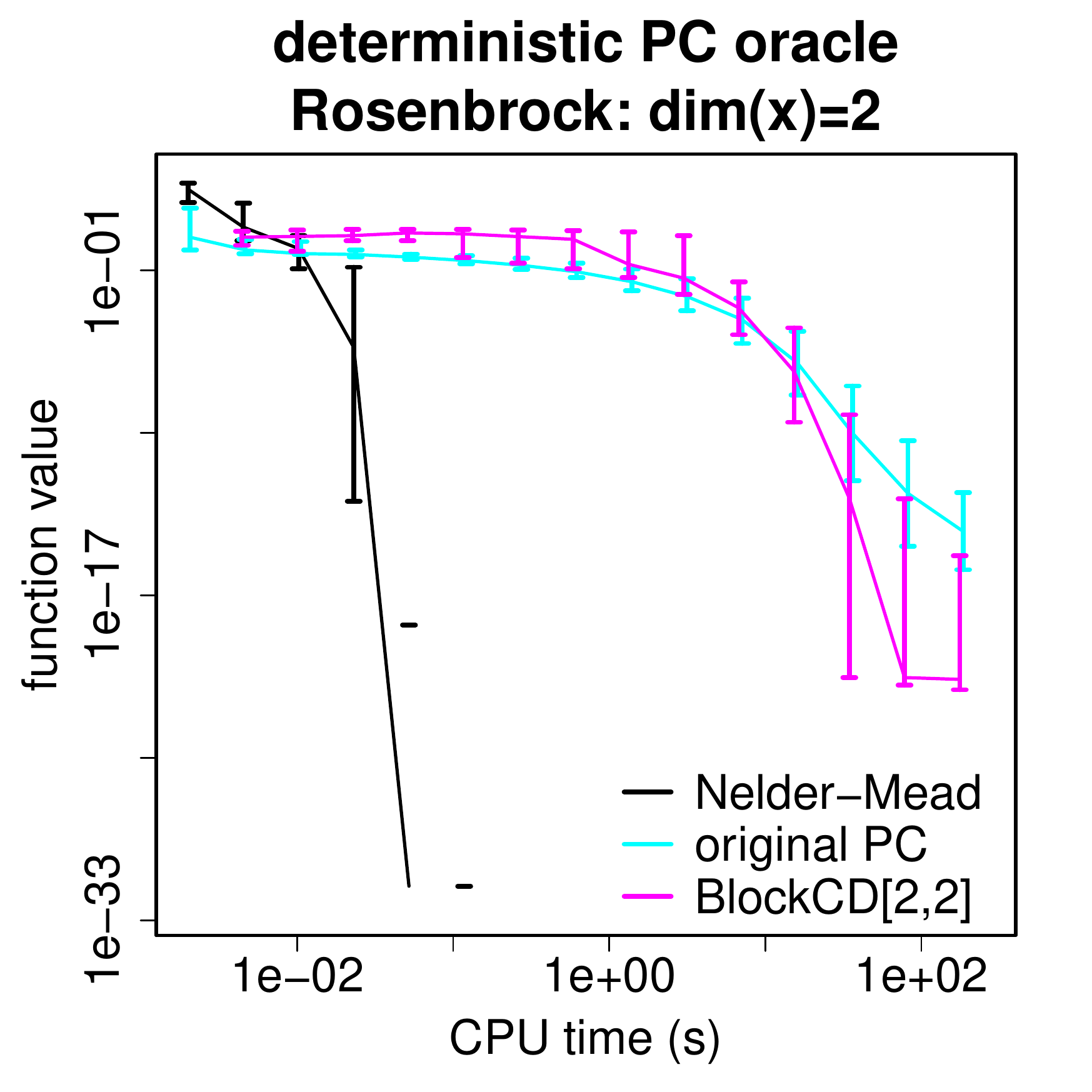}
 \end{tabular}
 \caption{
  Left panel: 2-dimension quadratic function. Right panel: 2-dimension Rosenbrock function. 
  The Nelder-Mead method, BlockCD$[2,1]$, and BlockCD$[2,2]$ are compared. 
  For each algorithm, the median of the function value is depicted to the CPU time (s). 
  The vertical bar shows the percentile $30\%$ to $70\%$.}
 \label{fig:2_dim_problems}
 \end{center}
\end{figure}

\subsection{Numerical Experiments of Parallel Computation}
In high dimensional problems, however, the performance of the Nelder-Mead method is easily degraded as 
reported by several authors; see \cite{gao12:_implem_nelder_mead_simpl_algor_adapt_param} and references
therein. In the below, we focus on solving moderate-scale optimization problems. 

In experiments using the PC oracle, BlockCD$[n,m]$ with $m\geq2$ and its parallel
implementations were compared with the Nelder-Mead method and the original PC algorithm, 
i.e., BlockCD$[n,1]$. In each iteration of BlockCD$[n,m]$ with $m\geq2$, $m+1$ runs of the
line search were required. In the parallel implementation, tasks of line search except the
search step in Algorithm~\ref{alg1} were almost equally assigned to each core in processors. 
In the below, the parallel implementation of BlockCD$[n,m]$ is referred to as
parallel-BlockCD$[n,m]$. Suppose that $c$ cores are used in parallel-BlockCD$[n,m]$.  
Then, ideally, the parallel computation will be approximately $(m+1)/(m/c+1)\approx{c}$ times more
efficient than the serial processing, when $n$ and $m$ are much greater than $c$. 
Practically, however, the communication overhead among processors may cancel the effect of 
the parallel computation, especially in small-scale problems.  

\begin{figure}[tp]
 \begin{center}
 \begin{tabular}{cc}
  \includegraphics[scale=0.4]{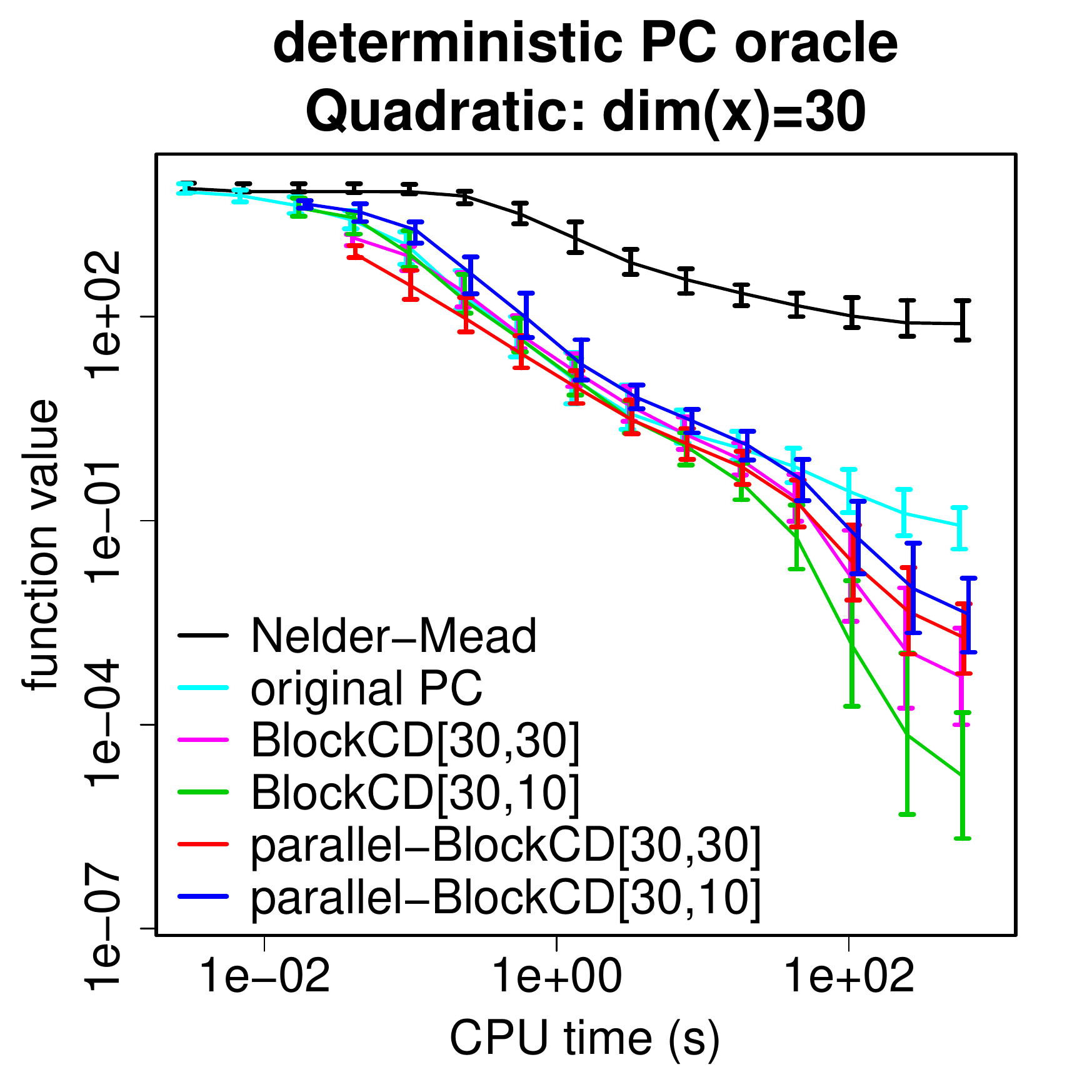}&
  \includegraphics[scale=0.4]{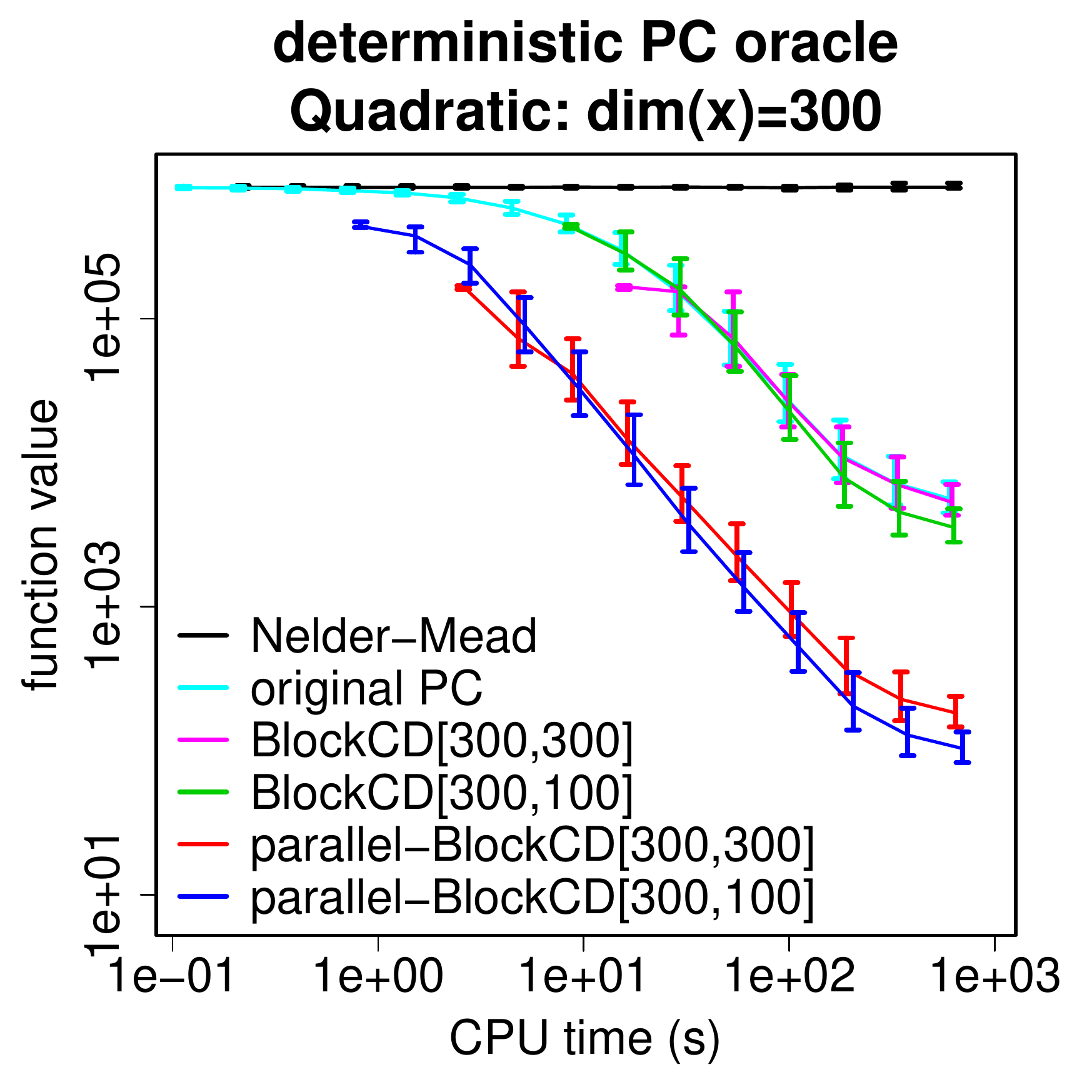}\\
  \includegraphics[scale=0.4]{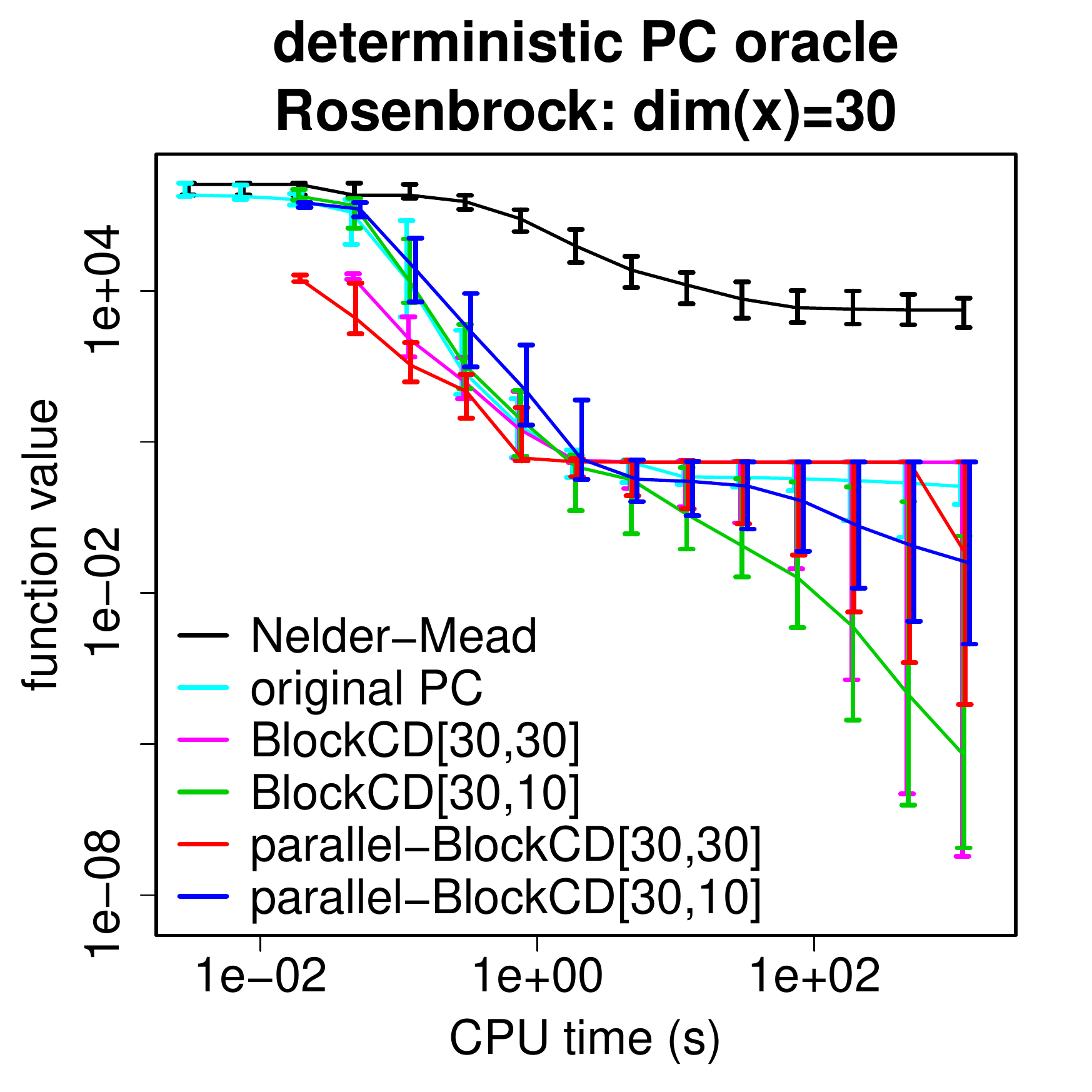}&
  \includegraphics[scale=0.4]{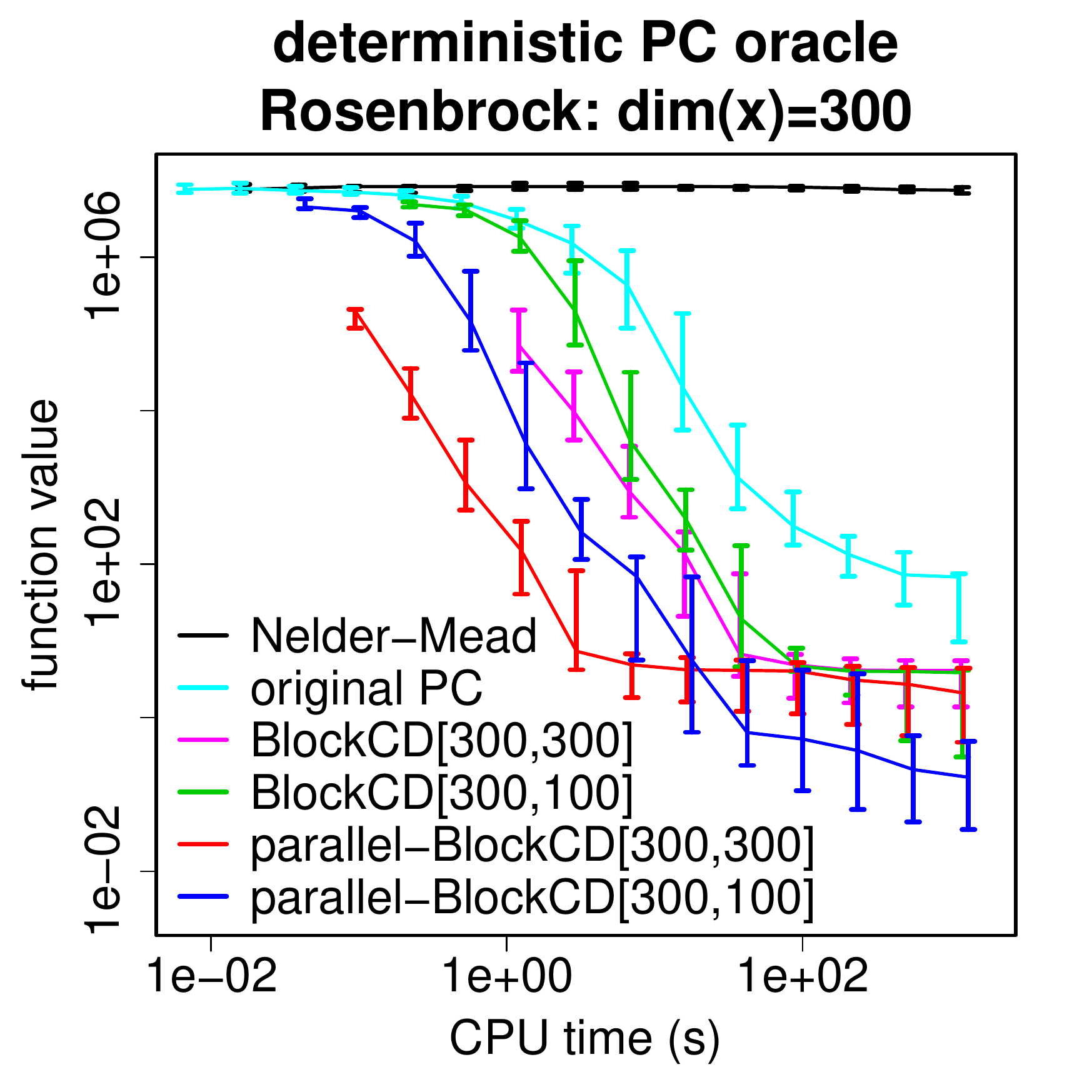}
 \end{tabular}
 \caption{
  Deterministic PC oracle is used in PC-based BlockCD algorithm. 
  Top panels: results in optimization of quadratic function. Bottom panels: results in
  optimization of Rosenbrock function. 
  The original PC algorithm,  BlockCD$[n,m]$ with $m=n$ and $m=n/3$, 
  and parallel-BlockCD$[n,m]$ with $m=n$ and $m=n/3$, 
  are compared for $n=30$ and $n=300$. 
  The median of the function value is shown to the CPU time (s). 
  The vertical bar shows the percentile $30\%$ to $70\%$. 
  }
 \label{fig:high_dim_problems}
 \end{center}
\end{figure}
We tested optimization methods on two $n$-dimensional optimization problems, i.e., 
the quadratic function $f(x)=x^TAx$, and Rosenbrock function, 
$f(x)=\sum_{i=1}^{n-1}[(1-x_i)^2+100(x_{i+1}-x_i^2)^2]$, where the matrix $A$ was a 
randomly generated $n$ by $n$ positive definite matrix. 
The quadratic function satisfies the assumptions in Theorem~\ref{upper}, while the
Rosenbrock function is not convex. 
We examine whether the proposed method is efficient even when the theoretical assumptions
are not necessarily assured. 
In each objective function, the dimension was set to $n=30$ or $300$. 
In all problems, the optimal value is zero. 
For each algorithm, the optimization was repeated 10 times using randomly chosen initial
points. 
According to~\cite{gao12:_implem_nelder_mead_simpl_algor_adapt_param}, 
we examined some tuning parameters for the Nelder-Mead algorithm, and we found that the
initial simplex does not significantly affect the numerical results in the present
experiments. Hence, the standard parameter setting of the Nelder-Mead method used
in~\cite{gao12:_implem_nelder_mead_simpl_algor_adapt_param} was used throughout the
present experiments. 

The numerical results using the deterministic PC oracle are presented in
Figure~\ref{fig:high_dim_problems}.  
For each algorithm, the median of function values in optimization process is depicted as the solid
line with 30 and 70 percentiles for each CPU time. 
The results indicate that the Nelder-Mead method does not efficiently work even for
30-dimensional quadratic function. 
The original PC algorithm and serially executed BlockCD$[n,m]$ were comparable. 
This result is consistent with \eqref{eq:quecom}. 
When the deterministic PC oracle is used, the upper bound of the query complexity is
independent of $m$. 
As for the efficiency of the parallel computation, the parallel-BlockCD$[n,m]$
outperformed the competitors in 300 dimensional problems.  
In our experiments, the parallel implementation was about 15 times more efficient than the
serial implementation in CPU time. 
For large-scale problems, the communication overhead is canceled by the efficiency of the
parallel computation. In our approach, the parallel computation is easily conducted
without losing the convergence property proved in Theorem~\ref{upper}. 

Also, we conducted optimization using the stochastic PC oracle. The results are shown in
Fig.~\ref{fig:high_dim_problems_stochasticPC}. The parameter in the stochastic PC oracle
was set to $\kappa=2, \delta_0=0.3$ and $\mu=0.01$. 
Thus, the difference of two function values affects the probability that the oracle returns the
correct sign. According to Lemma~\ref{sto_line}, the query was repeated at each point so
that the probability of receiving the correct sign was greater than $1-\delta$ with
$\delta=0.01$. 
As shown in the results of $\mathrm{BlockCD}[300,100]$ and $\mathrm{BlockCD}[300,300]$, 
the serial implementation of $\mathrm{BlockCD}[n,m]$ for a large $m$ was extremely
inefficient. Indeed, the right panels of Fig.~\ref{fig:high_dim_problems_stochasticPC}
indicate that an iteration of $\mathrm{BlockCD}[300,300]$ takes a long time. 
Also, the convergence rate of the original PC algorithm was slow, though the computational
cost of each iteration was not high. When the stochastic PC oracle was used, the parallel
implementation of BlockCD$[n,m]$ achieved fast convergence rate compared with the other
algorithms in CPU time. 


\begin{figure}[tp]
 \begin{center}
 \begin{tabular}{cc}
  \includegraphics[scale=0.4]{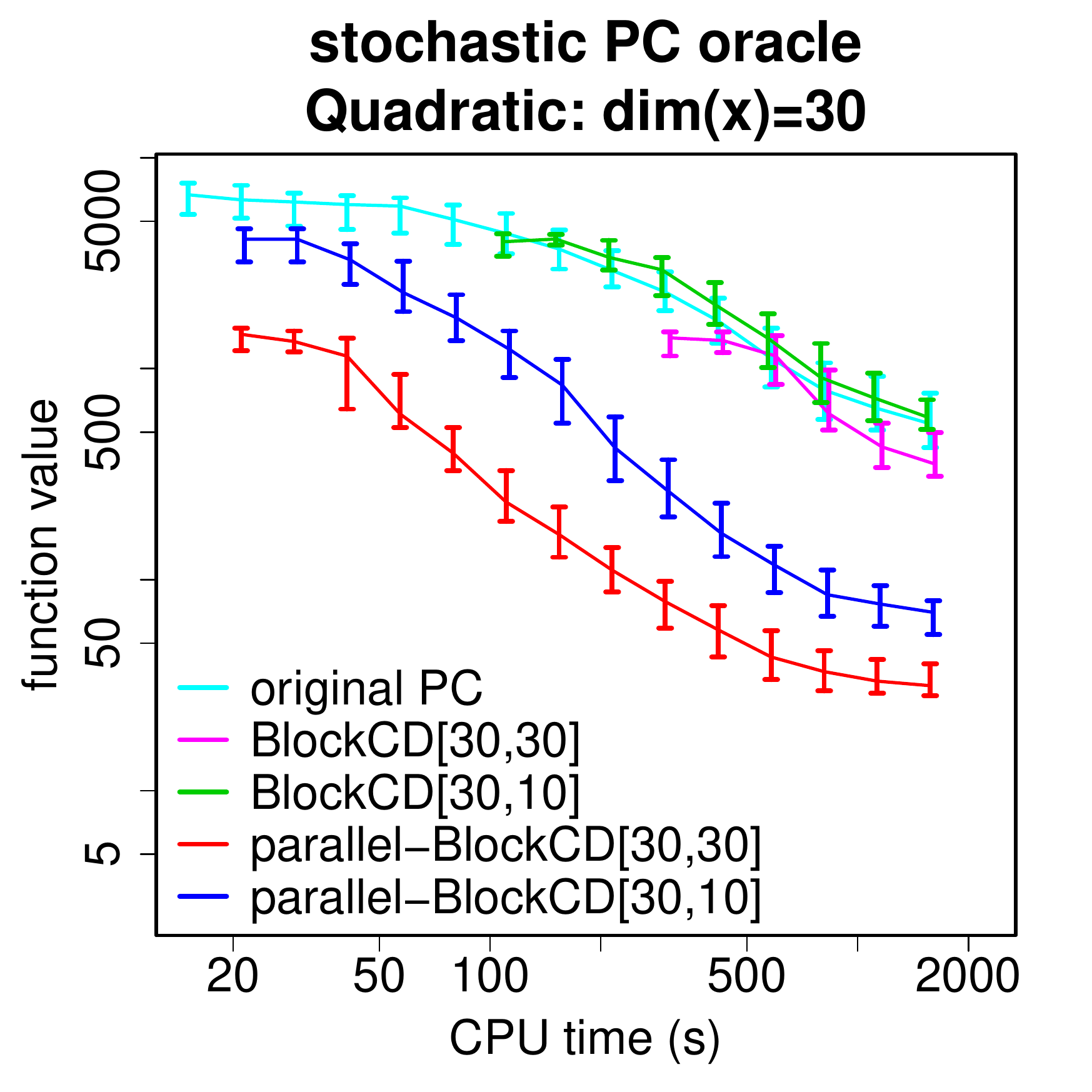}&
  \includegraphics[scale=0.4]{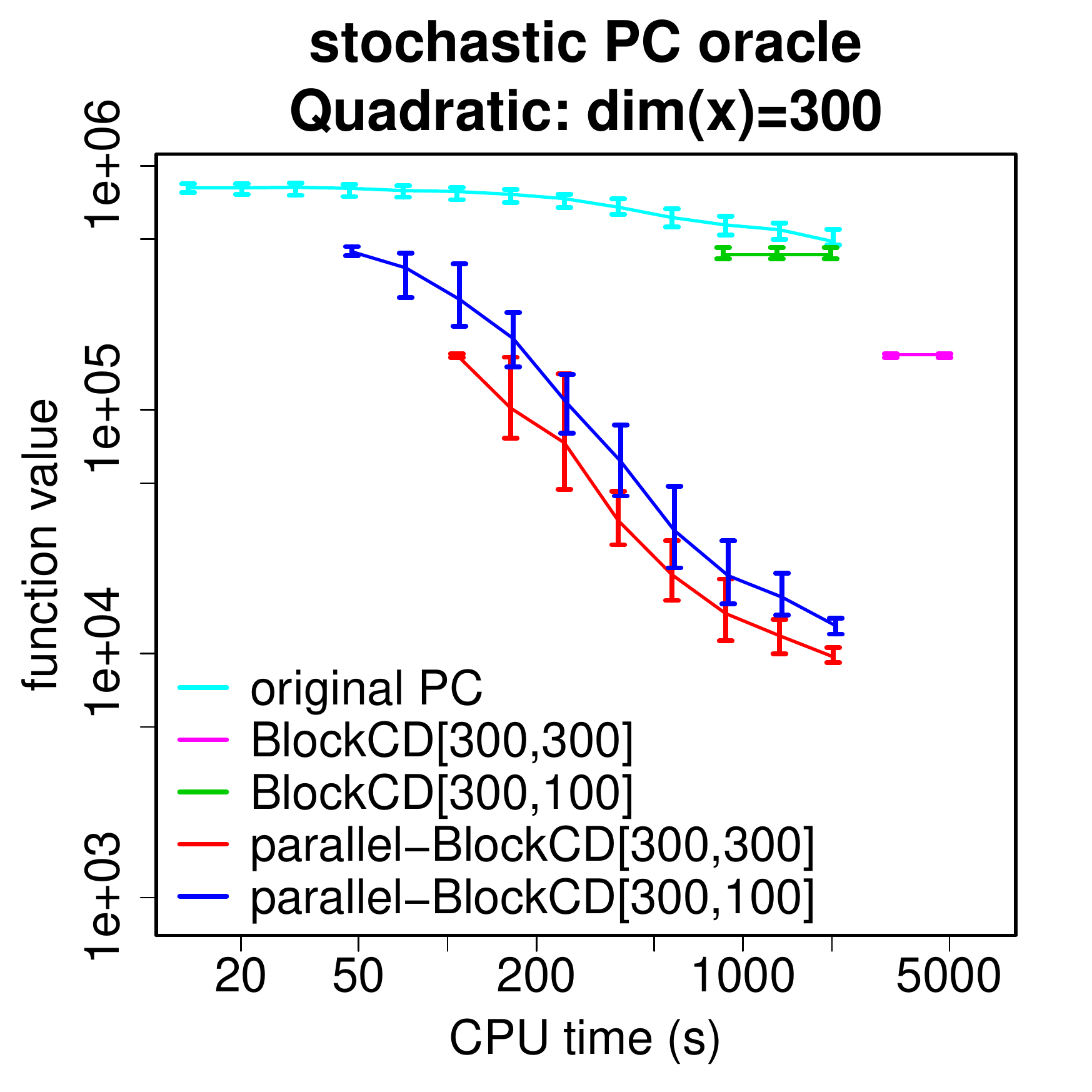}\\
  \includegraphics[scale=0.4]{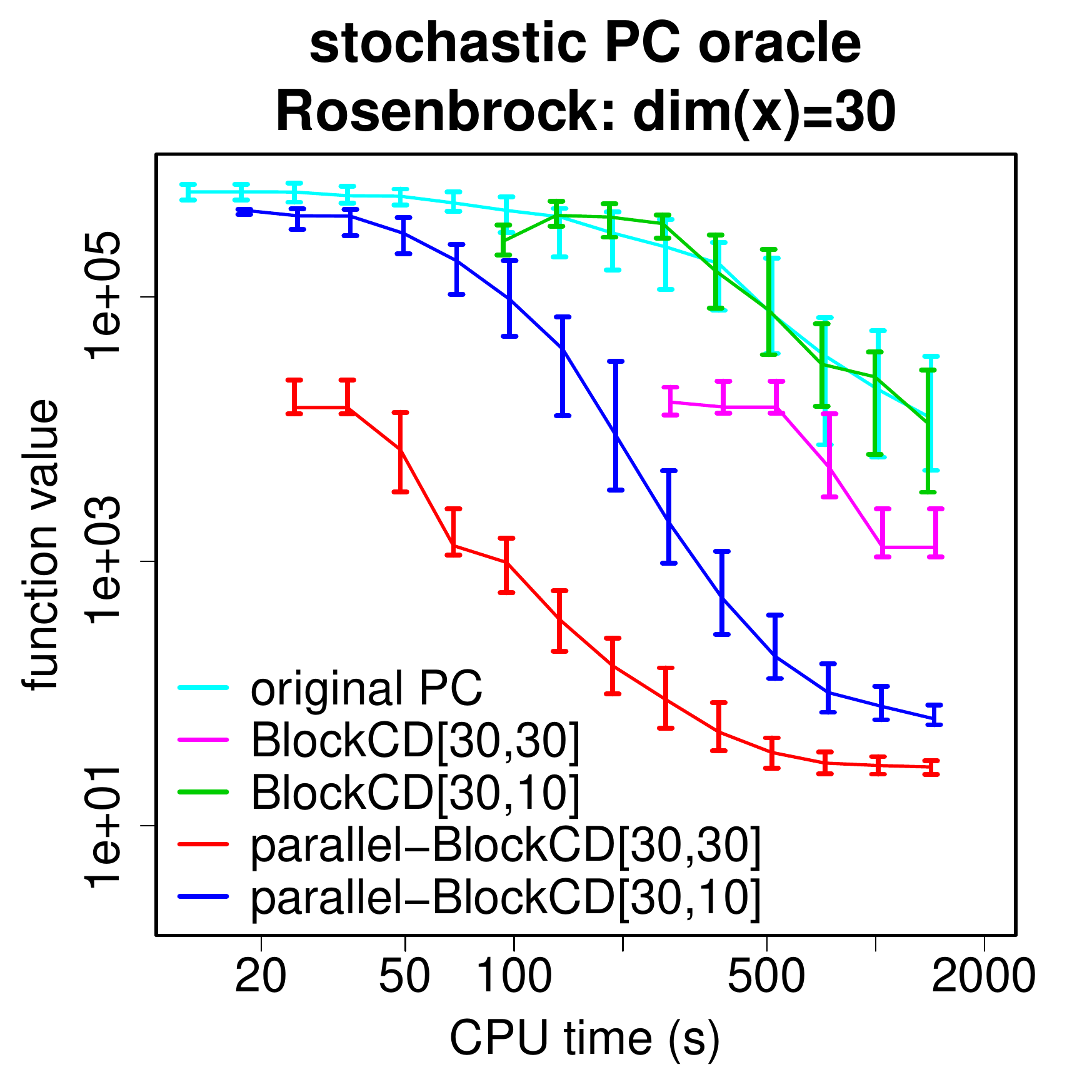}&
  \includegraphics[scale=0.4]{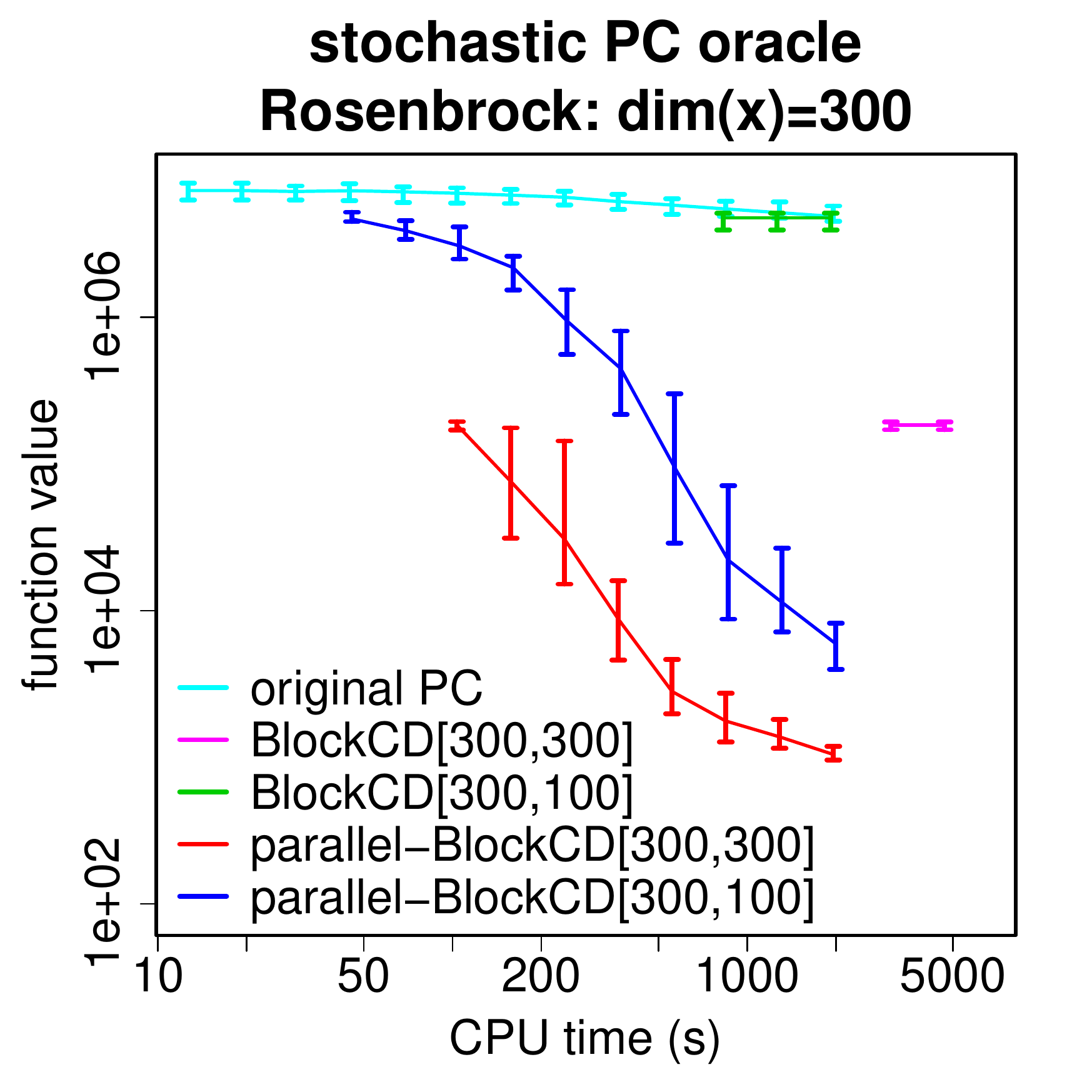}
 \end{tabular}
 \caption{
  Stochastic PC oracle is used in PC-based BlockCD algorithm. 
  Top panels: results in optimization of quadratic function. Bottom panels: results in
  optimization of Rosenbrock function. 
  The original PC algorithm,  BlockCD$[n,m]$ with $m=n$ and $m=n/3$, 
  and parallel-BlockCD$[n,m]$ with $m=n$ and $m=n/3$, 
  are compared for $n=30$ and $n=300$. 
  The median of the function value is shown to the CPU time (s). 
  The vertical bar shows the percentile $30\%$ to $70\%$. 
  }
 \label{fig:high_dim_problems_stochasticPC}
 \end{center}
\end{figure}

\section{Conclusion}
\label{con}
In this paper, we proposed a block coordinate descent algorithm for unconstrained optimization problems 
using the pairwise comparison of function values.
Our algorithm consists of two steps: the direction estimate step and search step. 
The direction estimate step 
can easily be parallelized. 
Hence, our algorithm is effectively applicable to large-scale optimization problems. 
Theoretically, we obtained an upper bound of the convergence rate and query complexity, 
when the deterministic and stochastic pairwise comparison oracles were used. 
Practically, our algorithm is simple and easy to implement. 
In addition, numerical experiments showed that the parallel implementation of our
algorithm outperformed the other methods. 
An extension of our algorithm to constrained optimization problems is an important future
work. Other interesting research directions include pursuing the relation between pairwise
comparison oracle and other kind of oracles such as gradient-sign
oracle~\cite{ramdas13:_algor_connec_activ_learn_stoch_convex_optim}.

\bibliographystyle{plain}




\appendix





\section{Proof of Theorem \ref{upper}} \label{proof.upper}
\begin{proof}

The optimal solution of $f$ is denoted as $\x^*$. 
Let us define $\varepsilon'$ be $\varepsilon/(1+\frac{n}{m\gamma})$. 
If $f(\x_t)-f(\x^*)<\varepsilon'$ holds in the algorithm, we obtain
 $f(\x_{t+1})-f(\x^*)<\varepsilon'$, since the function value is non-increasing in each 
iteration of the algorithm\footnote{Monotone decrease of $f(\x_t)$ is assured by a minor
 modification of PC-oracle in~\cite{jamieson12:_query_compl_deriv_free_optim}.}. 

Next, we assume $\varepsilon'\leq{}f(\x_t)-f(\x^*)$. 
The assumption leads to
\begin{align*}
2\sigma\varepsilon'\leq2\sigma(f(\x_t)-f(\x^*))\leq\|\nabla{f}(\x_t)\|^2, 
\end{align*}
in which the second inequality is derived from (9.9) in \cite{boyd2004convex}. 
In the following, we use the inequality 
\begin{align*}
 f(\x_t+\beta_t\d_t/\|\d_t\|)\leq{} f(\x_t)-\frac{|\nabla{f}(\x_t)^T\d_t|^2}{2L\|\d_t\|^2}+\frac{L}{2}\eta^2
\end{align*}
that is proved in \cite{jamieson12:_query_compl_deriv_free_optim}. 
For the $i$-th coordinate, let us define the functions $g_{\mathrm{low}}(\alpha)$ and
 $g_{\mathrm{up}}(\alpha)$ as  
\begin{align*}
g_{\mathrm{low}}(\alpha)
 =f(\x_t)+\frac{\partial{f}(\x_t)}{\partial{x}_i}\alpha+\frac{\sigma}{2}\alpha^2,  \quad\text{and}\quad
g_{\mathrm{up}}(\alpha)
 =f(\x_t)+\frac{\partial{f}(\x_t)}{\partial{x}_i}\alpha+\frac{L}{2}\alpha^2. 
\end{align*}
Then, we have
\begin{align*}
g_{\mathrm{low}}(\alpha)\leq{}f(\x_t+\alpha{\e_i}) \leq{} g_{\mathrm{up}}(\alpha). 
\end{align*}
Let $\alpha_{\mathrm{up}}$ and $\alpha_i^*$ be the minimum solution of
 $\min_{\alpha}g_{\mathrm{up}}(\alpha)$ and $\min_{\alpha}f(\x_t+\alpha{\e_i})$,
 respectively. Then, we obtain
\begin{align*}
 g_{\mathrm{low}}(\alpha_i^*)
 \leq
 f(\x_t+\alpha_i^*{\e_i})
 \leq
 f(\x_t+\alpha_{\mathrm{up}}{\e_i})
\leq
 g_{\mathrm{up}}(\alpha_{\mathrm{up}}). 
\end{align*}
The inequality 
$g_{\mathrm{low}}(\alpha_i^*)\leq{} g_{\mathrm{up}}(\alpha_{\mathrm{up}})$ yields that 
$\alpha_i^*$ lies between 
$-c_0\frac{\partial{f}(x_t)}{\partial{x_i}}$ and 
$-c_1\frac{\partial{f}(x_t)}{\partial{x_i}}$, where
$c_0$ and $c_1$ are defined as 
\begin{align*}
 c_0=(1-\sqrt{1-\sigma/L})/\sigma,\quad
 c_1=(1+\sqrt{1-\sigma/L})/\sigma. 
\end{align*}
Here, $0<c_0\leq{}c_1$ holds. Each component of the search direction
$\d_t=(d_1,\ldots,d_n)\neq\0$ in Algorithm~\ref{alg1}  
satisfies $|d_i-\alpha_i^*|\leq\eta$ if $i=i_k$ and otherwise $d_i=0$. 
For $I=\{i_1,\ldots,i_m\}\subset\{1,\ldots,n\}$, let $\|\a\|_{I}^2$ of the vector
$\a\in\Rbb^n$ be $\sum_{i\in{I}}a_{i}^2$. 
Then, the triangle inequality leads to 
\begin{align*}
 \|\d_t\|& \leq  c_1\|\nabla{f}(\x_t)\|_{I} + \sqrt{m}\eta,\\ 
|\nabla{f}(\x_t)^T\d_t|
 &\geq
 c_0\|\nabla{f}(\x_t)\|_{I}^2-\sqrt{m}\eta \|\nabla{f}(\x_t)\|_{I}. 
\end{align*}
 The assumption $\varepsilon'\leq{}f(\x_t)-f(\x^*)$ and the inequalities 
 $2\sigma(f(\x_t)-f(\x^*))\leq\|f(\x_t)\|^2,\,1/4L^2\leq{c_0}^2$
 lead to 
\begin{align*}
 \eta
 =
 \sqrt{\frac{\varepsilon'\sigma}{8L^2n}}
 \leq
 c_0\sqrt{\frac{\sigma\varepsilon'}{2n}}\leq{}c_0\frac{\|\nabla{f}(\x_t)\|}{2\sqrt{n}}. 
\end{align*}
Hence, we obtain
\begin{align*}
 \|\d_t\|& \leq  c_1\|\nabla{f}(\x_t)\|_{I} + \frac{c_0}{2}\sqrt{\frac{m}{n}}\|\nabla{f(\x_t)}\|,\\ 
 |\nabla{f}(\x_t)^T\d_t|
 &\geq
 \left[
 c_0\|\nabla{f}(\x_t)\|_{I}^2-\frac{c_0}{2}\sqrt{\frac{m}{n}}\|\nabla{f}(\x_t)\|\|\nabla{f}(\x_t)\|_{I}
 \right]_+, 
\end{align*}
where $[x]_+=\max\{0,x\}$ for $x\in\Rbb$. 
Let $Z=\sqrt{\frac{n}{m}}\|\nabla{f}(x_t)\|_I/\|\nabla{f}(x_t)\|$ be a non-negative valued
 random variable defined from the random set $I$, and define the non-negative value $k$ as
 $k=c_0/c_1\leq{1}$. 
A lower bound of the expectation of $(|\nabla{f}(\x_t)^T\d_t|/\|\d_t\|)^2$ with respect to the
distribution of $I$ is given as 
\begin{align*}
 \Ebb_I\left[ \left(\frac{|\nabla{f}(\x_t)^T\d_t|}{\|\d_t\|}\right)^2 \right]
& \geq
 \Ebb_I\left[
 \left(
 \frac{ \left[
 c_0\|\nabla{f}(\x)\|_{I}^2-\frac{c_0}{2}\sqrt{\frac{m}{n}}\|\nabla{f}(\x_t)\|\|\nabla{f}(\x_t)\|_I
 \right]_+}
 {c_1\|\nabla{f}(\x_t)\|_I + \frac{c_0}{2}\sqrt{\frac{m}{n}}\|\nabla{f(\x_t)}\|}
 \right)^2
 \right] \\
& =
 k^2\frac{m}{n}\|\nabla{f(\x_t)}\|^2\Ebb_I\left[{Z^2} \frac{[Z-1/2]_+^2}{(Z+k/2)^2}\right]\\
& \geq
 k^2\frac{m}{n}\|\nabla{f(\x_t)}\|^2 \Ebb_I\left[{Z^2}
 \frac{[Z-1/2]_+^2}{(Z+1/2)^2}\right]. 
\end{align*}
The random variable $Z$ is non-negative, and $\Ebb_I[Z^2]=1$ holds. 
Thus, Lemma~\ref{eqn:lemma_bound_expZ} in the below leads to 
\begin{align*}
 \Ebb_I\left[ \left(\frac{|\nabla{f}(\x_t)^T\d_t|}{\|\d_t\|}\right)^2 \right]
 &\geq
 \frac{k^2}{53}\frac{m}{n}\|\nabla{f}(\x_t)\|^2. 
\end{align*}
Eventually, if $\varepsilon'\leq{}f(\x_t)-f(\x^*)$, the conditional expectation of 
$f(\x_{t+1})-f(\x^*)$ for given $\d_0,\d_1,\ldots,\d_{t-1}$ is given as 
\begin{align*}
 \Ebb[f(\x_{t+1})-f(\x^*)|\d_0,\ldots,\d_{t-1}]
& \leq
 f(\x_t)-f(\x^*)- \frac{k^2}{106L}\frac{m}{n}\|\nabla{f}(\x_t)\|^2 +\frac{L\eta^2}{2}\\
& \leq
 \left(1-\frac{m}{n}\gamma\right)(f(\x_t)-f(\x^*)) +\frac{L\eta^2}{2}. 
\end{align*}
Combining the above inequality with the case of $f(\x_t)-f(\x^*)<\varepsilon'$, we obtain
\begin{align*}
&\phantom{\leq} 
 \Ebb[f(\x_{t+1})-f(\x^*)|\d_{0},\ldots,\d_{t-1}]\\
 & \leq
 \1[f(\x_{t})-f(\x^*)\geq\varepsilon']\cdot \left[ 
 \left(1-\frac{m}{n}\gamma\right)(f(\x_{t})-f(\x^*)) +\frac{L\eta^2}{2}\right]
 + \1[f(\x_{t})-f(\x^*)<\varepsilon'] \cdot\varepsilon'. 
\end{align*}
The expectation with respect to all $\d_0,\ldots,\d_t$ yields 
\begin{align*}
 \Ebb[f(\x_{t+1})-f(\x^*)]
& \leq
 \left(1-\frac{m}{n}\gamma\right)  \Ebb[\1[f(\x_{t})-f(\x^*)\geq\varepsilon'](f(\x_{t})-f(\x^*))] \\
& \phantom{\leq}
 +\Ebb[\1[f(\x_{t})-f(\x^*)\geq\varepsilon']]\frac{L\eta^2}{2}
 + \Ebb[\1[f(\x_{t})-f(\x^*)<\varepsilon'] ]\varepsilon' \\
& \leq
 \left(1-\frac{m}{n}\gamma\right) \Ebb[f(\x_{t})-f(\x^*)]
 +\max\left\{ \frac{L\eta^2}{2},\,\varepsilon' \right\}.
\end{align*}
Since $0<\gamma<1$ and $\max\{L\eta^2/2,\,\varepsilon'\}=\varepsilon'$ hold, 
for $\Delta_T=\Ebb[f(\x_T)-f(\x^*)]$ we have 
\begin{align*}
 \Delta_{T} -  \frac{n}{m}  \frac{\varepsilon'}{\gamma} 
\leq \left(1 -  \frac{m}{n}\gamma\right) \left(\Delta_{T-1} - \frac{n}{m}  \frac{\varepsilon'}{\gamma} 
 \right) 
\leq 
\left(1 -  \frac{m}{n}\gamma\right)^T \Delta_0. 
\end{align*}
When $T$ is greater than $T_0$ in \eqref{eqn:def_T0}, we obtain
$\left(1 - \frac{m}{n}\gamma\right)^{T} \Delta_{0}\leq\varepsilon'$
and 
\begin{align*}
\Delta_T\leq \varepsilon'\left(1+\frac{n}{m\gamma}\right)
 =
 \varepsilon. 
\end{align*}
Let us consider the accuracy of the numerical solution $\x_T$. 
As shown in \cite[Chap.~9]{boyd2004convex}, the inequality
\begin{align*}
 \|\x-\x^*\|^2\leq\frac{8L}{\sigma^2}(f(\x)-f(\x^*))
\end{align*}
holds. Thus, for $T\geq{T_0}$, we have 
\begin{align*}
 \mathbb{E}[\|\x_T-\x^*\|]^2\leq
 \mathbb{E}[\|\x_T-\x^*\|^2]\leq\frac{8L}{\sigma^2}\varepsilon
 =
 64n\left(\frac{L}{\sigma}\right)^3\left(1+\frac{n}{m\gamma}\right)\eta^2. 
\end{align*}
\end{proof}

\begin{lem}
 \label{eqn:lemma_bound_expZ}
 Let $Z$ be a non-negative random variable satisfying $\Ebb[Z^2]=1$. 
 Then, we have
 \begin{align*}
  \Ebb\left[{Z^2} \frac{[Z-1/2]_+^2}{(Z+1/2)^2}\right]\geq\frac{1}{53}.
 \end{align*}
\end{lem}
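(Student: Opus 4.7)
The plan is to exploit two features of the integrand: it vanishes on $[0,1/2]$, and on $[1/2,\infty)$ the ratio $[z-1/2]_+^2/(z+1/2)^2$ is strictly increasing and approaches $1$. Because $\Ebb[Z^2]=1$ is fixed, the mass of $Z^2$ cannot concentrate entirely on a small initial interval, so a definite amount of it must live where this ratio is bounded below by a positive constant. This is essentially a Markov-type truncation argument.

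Concretely, I would fix a threshold $t\in(1/2,1)$ and use the monotonicity of $z\mapsto(z-1/2)/(z+1/2)=1-1/(z+1/2)$ to obtain, for every $z\geq t$,
\[
\frac{[z-1/2]_+^2}{(z+1/2)^2}\;\geq\;\frac{(t-1/2)^2}{(t+1/2)^2}.
\]
Multiplying by $z^2$ and restricting the expectation to $\{Z\geq t\}$ (using that the integrand is nonnegative elsewhere) gives
\[
\Ebb\!\left[Z^2\frac{[Z-1/2]_+^2}{(Z+1/2)^2}\right]\;\geq\;\frac{(t-1/2)^2}{(t+1/2)^2}\,\Ebb\!\left[Z^2\,\1[Z\geq t]\right].
\]

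The remaining expectation is controlled by the one-line truncation $\Ebb[Z^2\,\1[Z<t]]\leq t^2$, which yields $\Ebb[Z^2\,\1[Z\geq t]]\geq 1-t^2$. Substituting produces a one-parameter family of lower bounds
\[
\Ebb\!\left[Z^2\frac{[Z-1/2]_+^2}{(Z+1/2)^2}\right]\;\geq\;(1-t^2)\,\frac{(t-1/2)^2}{(t+1/2)^2},\qquad t\in(1/2,1),
\]
over which one is free to optimize.

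The constant $1/53$ is far from tight here. Maximizing the right-hand side is a one-variable calculus exercise whose optimum lies near $t\approx 0.815$ and yields about $0.019$, but the clean rational choice $t=4/5$ already suffices: it gives $(1-16/25)\cdot(3/10)^2/(13/10)^2 = (9/25)\cdot(9/169)=81/4225$, and the elementary check $81\cdot 53=4293>4225$ confirms $81/4225>1/53$. Hence no delicate estimate is required; the only real ``obstacle'' is recognizing the truncation structure of the problem and picking a workable threshold.
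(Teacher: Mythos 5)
Your argument is correct and is essentially the paper's own proof: the paper uses the identical truncation bound $\frac{[z-1/2]_+^2}{(z+1/2)^2}\geq\frac{\delta^2}{(1+\delta)^2}\1[z\geq 1/2+\delta]$ together with $\Ebb[Z^2\1[Z<1/2+\delta]]\leq(1/2+\delta)^2$, which is your estimate with $t=1/2+\delta$. In fact your version is slightly more complete, since you exhibit the explicit choice $t=4/5$ (i.e.\ $\delta=3/10$) giving $81/4225>1/53$, whereas the paper only says that $\delta$ is ``set appropriately.''
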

\begin{proof}
For $z\geq0$ and $\delta\geq0$, we have the inequality
\begin{align*}
 \frac{[z-1/2]_+^2}{(z+1/2)^2}
 \geq
 \frac{\delta^2}{(1+\delta)^2}\1[z\geq1/2+\delta]. 
\end{align*}
Then, we get
\begin{align*}
 \Ebb\left[{Z^2} \frac{[Z-1/2]_+^2}{(Z+1/2)^2}\right]
 & \geq
 \frac{\delta^2}{(1+\delta)^2}\Ebb[Z^2 \1[Z\geq1/2+\delta]]\\
 &=
 \frac{\delta^2}{(1+\delta)^2}\Ebb[Z^2(1-\1[Z<1/2+\delta])]\\
 &= 
 \frac{\delta^2}{(1+\delta)^2}
 \left( 1-\Ebb[Z^2\1[Z<1/2+\delta]] \right)\\
 &\geq
 \frac{\delta^2}{(1+\delta)^2}
 \left(
 1-(1/2+\delta)^2\Pr(Z<1/2+\delta)
 \right)\\
&\geq
 \frac{\delta^2}{(1+\delta)^2} \left( 1-(1/2+\delta)^2 \right). 
\end{align*}
By setting $\delta$ appropriately, we obtain
\begin{align*}
 \Ebb_I\left[{Z^2} \frac{[Z-1/2]_+^2}{(Z+1/2)^2}\right]\geq\frac{1}{53}.
\end{align*}
\end{proof}

\section{Proof of Corollary \ref{convergence}}  \label{proof.convergence}
\begin{proof}
For the output $\hat{\x}_Q$ of BlockCD$[n,m]$,
$f(\hat{\x}_Q)\ge f(\hat{\x}_{Q+1})$ holds, and thus, the sequence $\{\hat{\x}_Q\}_{Q\in\N}$ is included in
\begin{eqnarray*}
C(x_0):=\{\x\in\R^n|f(\x)\le f(\x_0)\}.
\end{eqnarray*}
Since $f$ is convex and continuous, $C(\x_0)$ is convex and closed.
Moreover, since $f$ is convex and it has non-degenerate Hessian,
the Hessian is positive definite, and thus, $f$ is strictly convex. 
Then $C(\x_0)$ is bounded as follows.
We set 
the minimul directional derivative along the radial direction from $\x^*$ over the unit sphere around $\x^*$ as
\begin{eqnarray*}
b&:=&\min_{\|{\bm u}\|=1} \nabla f(\x^*+\u)\cdot \u.
\end{eqnarray*}
Then, $b$ is strictly positive and the following holds for any $\x\in C(x_0)$ such that $\|\x-\x^*\| \ge 1$, 
\begin{eqnarray*}
b\|\x-\x^*\|+(f(\x^*)-b)
\le f(\x)
\le f(\x_0).
\end{eqnarray*}
Thus we have 
\begin{eqnarray}
C(\x_0) \subset \left\{\x\bigg| \|\x-\x^*\| \le 1+\frac{f(\x_0)-f(\x^*)}{b}\right\}.
\label{include}
\end{eqnarray}
Since the right hand side of (\ref{include}) is a bounded ball, $C(\x_0)$ is also bounded.
Thus, $C(\x_0)$ is a convex compact set.

Since $f$ is twice continuously differentiable,
the Hessian matrix $\nabla^2f(\x)$ is continuous with respect to $\x\in\R^n$.
By the positive definiteness of the Hessian matrix,
the minimum and maximum eigenvalues $e_{min}(\x)$ and $e_{max}(\x)$ of $\nabla^2f(\x)$ are continuous and positive.
Therefore, there are the positive minimum value $\sigma$ of $e_{min}(\x)$ and maximum value $L$ of $e_{max}(\x)$ on the compact set $C(\x_0)$.
It means that $f$ is $\sigma$-strongly convex and $L$-Lipschitz on $C(\x_0)$.
Thus, the same argument to obtain (\ref{eq:quecom}) can be applied for $f$.
\end{proof}

\end{document}